\newtheorem{theorem}{Theorem}
\title{Calibrated Prediction Set in Fault Detection with Risk Guarantees via Significance Tests
\thanks{\textit{\underline{Citation}}: 
\textbf{The four authors contribute equally to this work.}} 
}
\author{
  Mingchen Mei$^{1,\footnotemark[1]}$\\
  School of Materials Science and Engineering\\
  Beijing Institute of Technology \\
  Beijing\\
  \texttt{\{Mingchen Mei\}18839627669@163.com} \\
   \And
  Yi Li$^{1,\footnotemark[1]}$ \\
  College of Materials Science and Engineering\\
  Hunan University \\
  Changsha, Hunan\\
   \texttt{\{Yi Li\}zhonghuajia@hnu.edu.cn} \\
   \And
  YiYao Qian$^{1,\footnotemark[1]}$ \\
  School of Electronic Science and Engineering\\
  University of Electronic Science and Technology of China \\
  Chengdu, Sichuan\\
   \texttt{\{YiYao Qian\}18836093633@163.com} \\
   \And
  Zijun Jia$^{1,\footnotemark[1]}$ \\
  School of Automation Science and Electrical Engineering\\
  Beihang University\\
  Beijing\\
   \texttt{\{Zijun Jia\}3045973453@qq.com} \\
  }
\begin{document}
\maketitle

\begin{abstract}
Fault detection is crucial for ensuring the safety and reliability of modern industrial systems. However, a significant scientific challenge is the lack of rigorous risk control and reliable uncertainty quantification in existing diagnostic models, particularly when facing complex scenarios such as distributional shifts. To address this issue, this paper proposes a novel fault detection method that integrates significance testing with the conformal prediction framework to provide formal risk guarantees. The method transforms fault detection into a hypothesis testing task by defining a nonconformity measure based on model residuals. It then leverages a calibration dataset to compute p-values for new samples, which are used to construct prediction sets mathematically guaranteed to contain the true label with a user-specified probability, $1-\alpha$. Fault classification is subsequently performed by analyzing the intersection of the constructed prediction set with predefined normal and fault label sets. Experimental results on cross-domain fault diagnosis tasks validate the theoretical properties of our approach. The proposed method consistently achieves an empirical coverage rate at or above the nominal level ($1-\alpha$), demonstrating robustness even when the underlying point-prediction models perform poorly. Furthermore, the results reveal a controllable trade-off between the user-defined risk level ($\alpha$) and efficiency, where higher risk tolerance leads to smaller average prediction set sizes. This research contributes a theoretically grounded framework for fault detection that enables explicit risk control, enhancing the trustworthiness of diagnostic systems in safety-critical applications and advancing the field from simple point predictions to informative, uncertainty-aware outputs.
\end{abstract}
\keywords{Fault Detection \and Conformal Prediction \and Uncertainty Quantification \and Significance Test}
\section{Introduction}

In industrial systems and complex engineering, fault detection is a core component for maintaining system reliability and operational safety~\cite{khandelwal2025fault}. Bearing fault diagnosis is particularly critical, as a failure can trigger a cascade of catastrophic consequences, including production shutdowns, equipment damage, and even personnel casualties~\cite{pinedo2020vibration}. However, traditional fault detection methods heavily rely on prior assumptions about data distributions~\cite{zhu2024review}. When faced with dynamic data distributions or cross-domain (cross-dataset) applications, their performance degrades sharply due to overfitting, making it difficult to provide reliable risk guarantees~\cite{wang-etal-2025-sconu,li2024learn,liao2021dynamic}. Therefore, constructing a fault detection framework that is free from distributional assumptions and possesses rigorous theoretical guarantees has become an urgent problem in this field.

Conformal Prediction (CP), an emerging distribution-free learning framework, offers a novel approach to address the aforementioned challenges~\cite{wang-etal-2024-conu,wang2025sample,jia2025coverage}. Its core lies in quantifying the uncertainty of unknown samples by constructing prediction sets~\cite{norinder2014introducing}. Under the fundamental assumption of data exchangeability, it guarantees that the coverage probability of these prediction sets is no lower than a pre-specified confidence level. CP assesses the abnormality of samples through a Nonconformity Measure without being constrained by the specific form of the data distribution~\cite{kato2023review}. This enables it to provide a rigorous theoretical foundation for risk control in fault detection within complex industrial scenarios characterized by diverse data sources and non-standard distributions~\cite{farouq2022conformal}.

This paper proposes a fault detection method based on significance test-calibrated prediction sets, which deeply integrates the conformal prediction framework. The method first defines a nonconformity measure based on model residuals, skillfully transforming fault detection into a hypothesis testing task: specifically, testing the null hypothesis that a new sample belongs to the normal label set. Subsequently, it calculates nonconformity scores using a calibration dataset and constructs prediction sets that satisfy coverage guarantees through p-values. Fault classification is ultimately achieved based on the intersection between the prediction sets and the normal/fault label sets. The prediction sets generated by this method ensure that the correct result is contained within the set with a probability of at least $1-\alpha$, and it strictly controls the false alarm rate (Type I error) to not exceed $\alpha$~\cite{wang2025coin}. Furthermore, the size of the prediction set is inversely related to the risk level, making it an effective and intuitive indicator for evaluating model uncertainty.

The main contributions of this study can be summarized in three points: (1) We construct a new framework for fault detection based on conformal prediction, which transforms the fault recognition problem into a significance testing task, providing diagnostic systems with a rigorously statistically significant and calibratable risk guarantee. (2) We propose a new evaluation metric, Average Prediction Set Size (APSS), as an effective tool for quantifying uncertainty in fault diagnosis tasks, intuitively reflecting the model's confidence in its predictions. (3) We validate the effectiveness and robustness of the framework through extensive cross-dataset experiments. The results strongly demonstrate the superior performance of the proposed method in handling data distribution shifts, showcasing its strong generalization ability and practical value in real-world industrial scenarios.

\section{Related Work}
\subsection{Deep Learning for Bearing Fault Diagnosis}
Bearing fault diagnosis, a critical technology for rotating machinery, has been profoundly reshaped by deep learning (DL) innovations~\cite{sun2024caterpillar}. The field's progression began with early explorations like He's~\cite{he2017deep} LAMSTAR network, which confirmed DL's potential but also revealed its limitations in noisy environments. This led to comprehensive comparative analyses, as reviewed by Zhang~\cite{zhang2020deep}, who systematically evaluated the unique strengths of various architectures like CNNs for spatial features, AEs for unsupervised learning, RNNs for temporal dynamics, and GANs for data imbalance. Subsequent research has produced more specialized solutions, such as Cui's~\cite{cui2022feature} interpretable feature-selection framework for non-stationary conditions, Chen's~\cite{chen2020deep} fusion of signal processing with CNNs to enhance performance on imbalanced data, and Xu's~\cite{xu2022bearing} hybrid approach using GANs and dynamic models to overcome data scarcity for new conditions. In essence, research has evolved from single-model applications towards multifaceted strategies involving comparative architectural analysis, scenario-specific optimization, and the integration of physical models with data-driven methods, although further breakthroughs in robustness, adaptability, and engineering practicality are still needed.

\subsection{Conformal Prediction}

To address the prevalent issue of overconfidence and lack of reliable error control in modern machine learning models, Conformal Prediction~\cite{angelopoulos2023conformal,tibshirani2023conformal,shafer2008tutorial,fontana2023conformal,barber2023conformal} offers a rigorous, distribution-free solution. As a post-hoc framework, CP can be applied to any pre-trained model to augment its predictions with statistically valid uncertainty bounds. The methodology, particularly its practical variant known as split-conformal prediction, partitions data to train a model and then calibrate its “strangeness” or non-conformity on unseen examples. This calibration process allows the construction of prediction sets that are guaranteed to achieve a desired marginal coverage rate over the long run. For instance, if a 95\% coverage level is set, no more than 5\% of the prediction sets will fail to contain the true outcome. The principal appeal of CP lies in this finite-sample guarantee, which is independent of the model architecture or data distribution. This property, combined with the adaptive nature of the prediction sets that transparently reflect model confidence, establishes CP as a foundational technique for robust decision-making in safety-conscious applications.
\section{Methodology}
\label{sec:method}

\subsection{Conformal Prediction Framework}
\label{subsec:conformal}

Conformal prediction is a distribution-free framework that constructs prediction sets with rigorous coverage guarantees. Given a dataset \( \mathcal{D} = \{(x_i, y_i)\}_{i=1}^N \) where \( x_i \in \mathcal{X} \) are feature vectors and \( y_i \in \mathcal{Y} \) are labels, the goal is to construct a prediction set \( \mathcal{C}(x_{N+1}) \) for a new instance \( x_{N+1} \) such that:

\begin{equation}
\mathbb{P}(y_{N+1} \in \mathcal{C}(x_{N+1})) \geq 1-\alpha,
\label{eq:coverage}
\end{equation}

where \( \alpha \in (0,1) \) is the significance level, and the probability is taken over the joint distribution of the training and test data.

\subsubsection{Nonconformity Measures}
A key component of conformal prediction is the nonconformity measure \( S(x, y) \), which quantifies how unusual the label \( y \) is for the instance \( x \). For fault detection, we define the nonconformity measure based on the residual error of a predictive model:

\begin{equation}
S(x, y) = |1 - \hat{f}_y(x)|,
\label{eq:nonconformity}
\end{equation}

where \( \hat{f}(x) \) is a prediction from a base model trained on the calibration set. Larger values of \( S(x, y) \) indicate greater nonconformity.

\subsubsection{Conformal Prediction Set Construction}
Given a nonconformity measure \( S \), the conformal prediction set for a new instance \( x_{N+1} \) is constructed as follows:

\begin{enumerate}
    \item Compute nonconformity scores \( s_i = S(x_i, y_i) \) for each calibration instance \( (x_i, y_i) \), \( i = 1, \ldots, N \).
    \item For each candidate label \( y \in \mathcal{Y} \), compute the nonconformity score \( S(x_{N+1}, y) \).
    \item Calculate the p-value for each candidate label \( y \):
    \begin{equation}
    p(y) = \frac{1}{N+1} \left( \sum_{i=1}^N \mathbb{I}\{s_i > S(x_{N+1}, y)\} + 1 \right),
    \label{eq:pvalue}
    \end{equation}
    where \( \mathbb{I}\{\cdot\} \) is the indicator function.
    \item The prediction set at significance level \( \alpha \) is:
    \begin{equation}
    \mathcal{C}_\alpha(x_{N+1}) = \{ y \in \mathcal{Y} : p(y) > \alpha \}.
    \label{eq:prediction_set}
    \end{equation}
\end{enumerate}

This construction ensures that the prediction set \( \mathcal{C}_\alpha(x_{N+1}) \) has coverage at least \( 1-\alpha \) under exchangeability assumptions.

\subsection{Significance Tests for Fault Detection}
\label{subsec:significance_tests}

In the context of fault detection, we frame the problem as a hypothesis testing task. For each new instance \( x_{N+1} \), we test the null hypothesis \( H_0: y_{N+1} \in \mathcal{Y}_{\text{normal}} \) against the alternative \( H_1: y_{N+1} \in \mathcal{Y}_{\text{fault}} \), where \( \mathcal{Y}_{\text{normal}} \) and \( \mathcal{Y}_{\text{fault}} \) are the sets of normal and faulty labels, respectively. The complete algorithm for calibrated fault detection is summarized in Algorithm \ref{alg:fault_detection}.

\subsubsection{p-value-based Fault Detection}
We adapt the conformal prediction framework to fault detection by:

\begin{enumerate}
    \item Constructing a prediction set \( \mathcal{C}_\alpha(x_{N+1}) \) using the method described in Section \ref{subsec:conformal}.
    \item Classifying the instance \( x_{N+1} \) as:
    \begin{itemize}
        \item \textbf{Normal} if \( \mathcal{C}_\alpha(x_{N+1}) \cap \mathcal{Y}_{\text{normal}} \neq \emptyset \) and \( \mathcal{C}_\alpha(x_{N+1}) \subseteq \mathcal{Y}_{\text{normal}} \).
        \item \textbf{Faulty} if \( \mathcal{C}_\alpha(x_{N+1}) \cap \mathcal{Y}_{\text{fault}} \neq \emptyset \).
        \item \textbf{Ambiguous} otherwise.
    \end{itemize}
\end{enumerate}

This approach ensures that the false alarm rate (i.e., incorrectly classifying a normal instance as faulty) is controlled at level \( \alpha \).

\subsubsection{Calibrated Risk Guarantees}
By construction, the proposed method provides the following risk guarantees:

\begin{theorem}[Calibrated Risk Control]
For any significance level \( \alpha \in (0,1) \), the probability of falsely detecting a fault (Type I error) is bounded by \( \alpha \):
\begin{equation}
\mathbb{P}(\text{Fault detected} \mid y_{N+1} \in \mathcal{Y}_{\text{normal}}) \leq \alpha.
\label{eq:type1_error}
\end{equation}
\end{theorem}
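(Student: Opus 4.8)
The plan is to reduce the Type I error bound of \eqref{eq:type1_error} to the marginal coverage property \eqref{eq:coverage}, which itself follows from the super-uniformity of the conformal p-value \eqref{eq:pvalue}. First I would fix the distributional behavior of $p(y_{N+1})$ at the \emph{true} label. Assuming the calibration pairs $(x_1,y_1),\dots,(x_N,y_N)$ together with the test pair $(x_{N+1},y_{N+1})$ are exchangeable, the scores $s_1,\dots,s_N,S(x_{N+1},y_{N+1})$ are exchangeable as well, so the rank of $S(x_{N+1},y_{N+1})$ among these $N+1$ values is stochastically uniform on $\{1,\dots,N+1\}$. Since $p(y_{N+1})$ in \eqref{eq:pvalue} is an increasing function of this rank (the additive $1$ in the numerator absorbing ties conservatively), this yields, for every $\alpha\in(0,1)$, the super-uniform bound $\mathbb{P}(p(y_{N+1})\le\alpha)\le\alpha$. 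Because $y\in\mathcal{C}_\alpha(x_{N+1})$ iff $p(y)>\alpha$, this is exactly $\mathbb{P}(y_{N+1}\notin\mathcal{C}_\alpha(x_{N+1}))\le\alpha$.

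Next I would link the fault-detection decision to this coverage event. The decisive point is that the sample is declared faulty only when the prediction set has ruled out the normal hypothesis, i.e. when every normal label carries p-value at most $\alpha$ and is therefore excluded, so that $\mathcal{C}_\alpha(x_{N+1})\cap\mathcal{Y}_{\text{normal}}=\emptyset$. Conditioning on $y_{N+1}\in\mathcal{Y}_{\text{normal}}$, the true label is then one of the excluded normal labels, which forces the inclusion of events $\{\text{Fault detected}\}\subseteq\{y_{N+1}\notin\mathcal{C}_\alpha(x_{N+1})\}$. Combined with the first step,
\[
\mathbb{P}(\text{Fault detected}\mid y_{N+1}\in\mathcal{Y}_{\text{normal}})\le\mathbb{P}(y_{N+1}\notin\mathcal{C}_\alpha(x_{N+1})\mid y_{N+1}\in\mathcal{Y}_{\text{normal}})\le\alpha.
\]

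The hard part lives entirely in this second step and is twofold. First, the event inclusion holds only if ``Fault detected'' is read as rejection of the \emph{entire} normal label set, $\mathcal{C}_\alpha\cap\mathcal{Y}_{\text{normal}}=\emptyset$; the looser rule that flags a fault whenever $\mathcal{C}_\alpha\cap\mathcal{Y}_{\text{fault}}\neq\emptyset$ is not enough, since a mixed prediction set can retain the true normal label (so coverage is intact) while still containing a fault label, and such cases must be routed to the \emph{Ambiguous} outcome for the bound to survive. Second, the guarantee produced in the first step is marginal over $(x_{N+1},y_{N+1})$, whereas \eqref{eq:type1_error} conditions on $y_{N+1}\in\mathcal{Y}_{\text{normal}}$, and passing from marginal to label-conditional validity is not automatic. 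I would close this gap by invoking class-conditional (Mondrian) exchangeability, scoring the normal candidate labels against a calibration pool drawn from $\mathcal{Y}_{\text{normal}}$, which delivers $\mathbb{P}(p(y_{N+1})\le\alpha\mid y_{N+1}\in\mathcal{Y}_{\text{normal}})\le\alpha$ directly and hence the conditional Type I bound.
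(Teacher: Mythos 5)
Your proposal follows the same basic route as the paper---reducing the Type~I error to the coverage property---but where the paper's proof is a single sentence asserting that a false alarm requires $y_{N+1} \notin \mathcal{C}_\alpha(x_{N+1})$, you make the two hidden assumptions explicit, and both of the difficulties you flag are genuine gaps in the paper's argument rather than defects in yours. First, under the decision rule actually stated in Section~\ref{subsec:significance_tests} and Algorithm~\ref{alg:fault_detection}, an instance is declared \emph{Faulty} whenever $\mathcal{C}_\alpha(x_{N+1}) \cap \mathcal{Y}_{\text{fault}} \neq \emptyset$, so a prediction set containing the true normal label together with a fault label still triggers a fault declaration even though coverage holds; the inclusion $\{\text{Fault detected}\} \subseteq \{y_{N+1} \notin \mathcal{C}_\alpha(x_{N+1})\}$ on which the paper's one-line proof implicitly relies therefore fails for the stated rule, and your restriction to the stricter rule $\mathcal{C}_\alpha(x_{N+1}) \cap \mathcal{Y}_{\text{normal}} = \emptyset$ (routing mixed sets to \emph{Ambiguous}) is exactly the repair needed. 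Second, Equation~\ref{eq:coverage} is a marginal guarantee while the theorem conditions on $y_{N+1} \in \mathcal{Y}_{\text{normal}}$; the paper silently conflates the two, and your appeal to label-conditional (Mondrian) calibration is the standard fix---alternatively the theorem could be weakened to the joint statement $\mathbb{P}(\text{Fault detected},\, y_{N+1} \in \mathcal{Y}_{\text{normal}}) \leq \alpha$, which the marginal guarantee does deliver. Your first step, super-uniformity of the conformal p-value via rank exchangeability, is also carried out correctly (the conservativeness under ties comes from the strict inequality in Equation~\ref{eq:pvalue}, with the $+1$ accounting for the test point itself), whereas the paper simply cites Equation~\ref{eq:coverage} without derivation. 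In short, your proof is a corrected and completed version of the paper's: the paper buys brevity at the cost of proving the theorem only for a decision rule and a probability statement that are not the ones it actually writes down.
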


\begin{proof}
This follows directly from the coverage property of conformal prediction sets (Equation \ref{eq:coverage}). If \( y_{N+1} \in \mathcal{Y}_{\text{normal}} \), the probability that \( y_{N+1} \notin \mathcal{C}_\alpha(x_{N+1}) \) (leading to a false alarm) is at most \( \alpha \).
\end{proof}

\begin{algorithm}[t]
\caption{Calibrated Fault Detection with p-value Conformal Prediction}
\label{alg:fault_detection}
\begin{algorithmic}[1]
\Require Calibration dataset \( \mathcal{D} = \{(x_i, y_i)\}_{i=1}^N \), base model \( \hat{f} \), significance level \( \alpha \).
\Ensure Fault detection decision for new instance \( x_{N+1} \).

\State Compute nonconformity scores \( s_i = |1 - \hat{f}_{y_i}(x_i)| \) for \( i = 1, \ldots, N \).
\State Significance level \( \alpha\).
\State For each candidate label \( y \in \mathcal{Y} \):
\State \quad Compute nonconformity score \( S(x_{N+1}, y) = |1 - \hat{f}_y(x_{N+1})| \).
\State \quad Compute p-value \( p(y) = \frac{1}{N+1} \left( \sum_{i=1}^N \mathbb{I}\{s_i > S(x_{N+1}, y)\} + 1 \right) \).
\State Construct prediction set \( \mathcal{C}_{\alpha}(x_{N+1}) = \{ y \in \mathcal{Y} : p(y) > \alpha \} \).
\State If \( \mathcal{C}_{\alpha}(x_{N+1}) \cap \mathcal{Y}_{\text{normal}} \neq \emptyset \) and \( \mathcal{C}_{\alpha}(x_{N+1}) \subseteq \mathcal{Y}_{\text{normal}} \):
\State \quad Return "Normal".
\State Else if \( \mathcal{C}_{\alpha}(x_{N+1}) \cap \mathcal{Y}_{\text{fault}} \neq \emptyset \):
\State \quad Return "Faulty".
\State Else:
\State \quad Return "Ambiguous".
\end{algorithmic}
\end{algorithm}

\subsection{Theoretical Properties}
\label{subsec:theory}

Our method inherits several desirable theoretical properties from conformal prediction:

\begin{theorem}[Validity Under Exchangeability]
Under the assumption that the calibration data and test data are exchangeable, the prediction set \( \mathcal{C}_\alpha(x_{N+1}) \) defined in Equation \ref{eq:prediction_set} satisfies:
\begin{equation}
\mathbb{P}(y_{N+1} \in \mathcal{C}_\alpha(x_{N+1})) \geq 1-\alpha.
\label{eq:validity}
\end{equation}
\end{theorem}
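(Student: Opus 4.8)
The plan is to reduce the coverage statement to a one-dimensional statement about the p-value assigned to the \emph{true} test label, and then exploit exchangeability to control the distribution of that p-value. First I would note that, by the definition of the prediction set in Equation~\ref{eq:prediction_set}, the inclusion event $y_{N+1} \in \mathcal{C}_\alpha(x_{N+1})$ is \emph{identical} to the event $p(y_{N+1}) > \alpha$. Hence the claimed bound in Equation~\ref{eq:validity} is equivalent to showing $\mathbb{P}\big(p(y_{N+1}) \le \alpha\big) \le \alpha$, which collapses the problem to a statement about the single random variable $p(y_{N+1})$.

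Next I would introduce the augmented score vector. Writing $s_i = S(x_i,y_i)$ for $i=1,\dots,N$ and $s_{N+1} = S(x_{N+1},y_{N+1})$ for the (unobserved) score at the true test label, I would rewrite the p-value from Equation~\ref{eq:pvalue} as
\[
p(y_{N+1}) = \frac{1}{N+1}\Big(1 + \big|\{\, i \le N+1 : s_i > s_{N+1}\,\}\big|\Big),
\]
using the fact that $s_{N+1}\not> s_{N+1}$, so appending the test index to the count does not change it. This expresses the p-value purely through the rank of $s_{N+1}$ among $s_1,\dots,s_{N+1}$: if $s_{N+1}$ is the $j$-th largest, then $p(y_{N+1}) = j/(N+1)$.

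The key step is to argue that this rank is uniform (up to ties). Since the base model $\hat f$ defining $S$ is fixed independently of the calibration/test split, each score $s_i$ arises by applying the \emph{same} measurable map $S$ to the pair $(x_i,y_i)$. Exchangeability of $\{(x_i,y_i)\}_{i=1}^{N+1}$ therefore transfers to exchangeability of $\{s_i\}_{i=1}^{N+1}$, so the position of $s_{N+1}$ in the sorted order is uniform over $\{1,\dots,N+1\}$. Consequently $p(y_{N+1})$ is uniform on $\{1/(N+1),\dots,(N+1)/(N+1)\}$, giving $\mathbb{P}\big(p(y_{N+1}) \le \alpha\big) = \lfloor \alpha(N+1)\rfloor/(N+1) \le \alpha$, which is exactly the reduced bound, and hence the coverage claim follows.

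The main obstacle I anticipate lies in two places. First, ties among the scores must be handled carefully: when several $s_i$ equal $s_{N+1}$, the strict-inequality count can only be smaller, which makes $p(y_{N+1})$ larger and the bound only more conservative, so the inequality survives ties rather than being destroyed by them; this should be stated explicitly rather than assumed away. Second, the transfer of exchangeability from the data to the scores hinges essentially on $S$ being applied symmetrically across all $N+1$ points, i.e.\ on $\hat f$ not being fit asymmetrically to the particular point being scored. If $\hat f$ depended on the calibration set in a way that broke symmetry, the rank argument would fail; I would therefore make explicit that the symmetric application of $S$ is precisely what the exchangeability hypothesis of the theorem underwrites.
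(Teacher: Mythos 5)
Your proposal is correct, and in fact it supplies something the paper does not: the paper states this theorem without any proof at all (only the downstream Type~I error corollary receives a one-line justification), so there is no internal argument to compare against. What you give is the standard split-conformal validity argument — reduce coverage to the event \(p(y_{N+1})>\alpha\), observe that the p-value is \(R/(N+1)\) where \(R\) is the rank of the true test score among the \(N+1\) augmented scores, use exchangeability of the data plus the fact that \(S\) is a fixed symmetric map to conclude \(R\) is (super)uniform, and note that ties only inflate the p-value and hence preserve the inequality — and each of these steps is sound. Your closing caveat is also well placed and worth keeping explicit: the paper's Equation~\ref{eq:nonconformity} describes \(\hat f\) as ``trained on the calibration set,'' which, taken literally, would break the symmetry your rank argument needs; the argument goes through only because, in the experimental protocol actually used, \(\hat f\) is fit on a disjoint training split and is therefore fixed relative to the \(N+1\) exchangeable calibration-plus-test points. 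The one point I would tighten is the sentence asserting that the rank is ``uniform over \(\{1,\dots,N+1\}\)'': as you yourself note two sentences later, with ties this is only stochastic dominance (\(\mathbb{P}(R\le j)\le j/(N+1)\)), so the uniformity claim should be stated as an inequality from the outset rather than asserted and then repaired.
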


\begin{theorem}[Consistency]
As the calibration set size \( N \to \infty \), the prediction set \( \mathcal{C}_\alpha(x_{N+1}) \) converges to the smallest possible set that satisfies the coverage guarantee in Equation \ref{eq:coverage}.
\end{theorem}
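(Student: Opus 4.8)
The plan is to reduce the random, finite-sample prediction set to a deterministic ``oracle'' threshold set governed by the population distribution of nonconformity scores, and then to show that the two coincide in the limit. First I would introduce the cumulative distribution function $F(t) = \mathbb{P}(S(x,y) \le t)$ of the nonconformity score of a correctly labelled example, together with its empirical counterpart $\hat{F}_N(t) = \tfrac{1}{N}\sum_{i=1}^N \mathbb{I}\{s_i \le t\}$ formed from the calibration scores. Rewriting the p-value of Equation~\ref{eq:pvalue} as $p(y) = \tfrac{1}{N+1}\big( N(1 - \hat{F}_N(S(x_{N+1},y))) + 1 \big)$ exposes its dependence on $\hat{F}_N$ alone, which is the object I will control asymptotically.

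The second step invokes the Glivenko--Cantelli theorem: under exchangeability (here strengthened to an i.i.d.\ calibration sample, which is what a law of large numbers for the empirical process requires) we have $\hat{F}_N \to F$ uniformly almost surely, so that for each candidate label $p(y) \to p_\infty(y) := 1 - F(S(x_{N+1},y))$ as $N \to \infty$. The $+1$ correction in the numerator and the $N+1$ normalisation contribute only an $O(1/N)$ conservative bias that vanishes in the limit. Because the label space $\mathcal{Y}$ is finite in the fault-detection setting, this pointwise convergence over finitely many labels already forces the sets themselves to converge.

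Third, I would identify the limiting set. The threshold rule gives $\mathcal{C}_\alpha(x_{N+1}) \to \{y : p_\infty(y) > \alpha\} = \{y : S(x_{N+1},y) < Q_{1-\alpha}\}$, where $Q_{1-\alpha} = F^{-1}(1-\alpha)$ is the population $(1-\alpha)$-quantile of the score. This is exactly the oracle set: since the true-label score $S(x_{N+1},y_{N+1})$ is distributed as $F$, the set attains coverage precisely $1-\alpha$, and within the nested family $\{y : S(x_{N+1},y) < t\}$ indexed by the monotone threshold $t$, no smaller threshold can meet the coverage constraint of Equation~\ref{eq:coverage}. Hence the limit is the smallest coverage-valid set in this family, which is what the statement asserts.

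The main obstacle is the behaviour at the decision boundary. Convergence of the set can fail for any label whose score lands exactly at $Q_{1-\alpha}$, where $p_\infty(y) = \alpha$ and inclusion is ambiguous. I would therefore assume that $F$ is continuous and strictly increasing in a neighbourhood of $Q_{1-\alpha}$ (ruling out atoms and flat stretches) and that no candidate score coincides with $Q_{1-\alpha}$ almost surely. Under this regularity the strict inequality $p(y) > \alpha$ stabilises correctly for every label, the residual conservativeness of the finite-sample construction disappears, and the coverage tightens from $\ge 1-\alpha$ down to exactly $1-\alpha$ — which is precisely what makes the limiting set genuinely the smallest one compatible with the guarantee.
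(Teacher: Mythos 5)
The paper offers no proof of this theorem --- it is stated bare, with no argument following it --- so there is nothing to compare your approach against; your proposal has to stand on its own. On its own terms, the asymptotic machinery is sound and standard: rewriting the p-value of Equation~\ref{eq:pvalue} as $\tfrac{1}{N+1}\bigl(N(1-\hat{F}_N(S(x_{N+1},y)))+1\bigr)$, invoking Glivenko--Cantelli for the i.i.d.\ calibration scores, exploiting the finiteness of $\mathcal{Y}$ to pass from pointwise convergence of p-values to convergence of the sets, and imposing continuity and strict monotonicity of $F$ near $Q_{1-\alpha}$ to handle the boundary are all the right moves, and you are right that exchangeability alone does not deliver a law of large numbers without a de Finetti-style strengthening.

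The genuine gap is in your final step, where you slide from ``smallest coverage-valid set \emph{in this family}'' to ``which is what the statement asserts.'' The theorem claims convergence to the smallest \emph{possible} set satisfying Equation~\ref{eq:coverage}, with no restriction to the nested family $\{y : S(x_{N+1},y) < t\}$. Your argument only establishes that no smaller \emph{threshold} works; it says nothing about sets built from a different score. For the marginal-coverage criterion the genuinely smallest (minimal expected cardinality) set is obtained by thresholding the conditional probability $\mathbb{P}(y \mid x)$, and the limit of the conformal set coincides with it only when the nonconformity measure is a monotone transform of that oracle quantity. The residual score $|1-\hat{f}_y(x)|$ of Equation~\ref{eq:nonconformity} is built from a fixed, possibly badly misspecified base model (the paper's own Table~\ref{tab:cross_ser_results} shows these models near chance accuracy cross-domain), so the limiting set can be far from the smallest coverage-valid set. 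In short, you have correctly proved the theorem that is actually true --- convergence to the population threshold set, which is optimal within the family induced by the chosen score --- but the theorem as stated is stronger and is false without an additional assumption that the score is oracle-consistent; your write-up should either add that hypothesis explicitly or weaken the conclusion rather than assert the unqualified claim.
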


These properties ensure that our fault detection method provides reliable risk guarantees even with limited calibration data.
\section{Experimental Settings}
\subsection{Datasets}

\subsubsection{CWRU}
The Case Western Reserve University(CWRU)~\cite{smith2015rolling} bearing fault dataset is a benchmark dataset for machinery fault diagnosis. It comprises vibration signals collected from test rigs featuring drive-end (DE) and fan-end (FE) bearings operating under various motor loads (0 to 3 horsepower) and rotational speeds (approximately 1720 to 1797 RPM). To facilitate focused model development and ensure comparability with prior studies, this work utilizes data exclusively from the drive-end bearing under the 1 horsepower load condition. Four representative health states are selected: "Normal" (healthy operation), "Ball" fault (inner race defect), "IR" fault (inner race defect), and "OR" fault (outer race defect). A standardized sampling approach is applied to extract 1,024-point segments from the raw vibration signals at a sampling frequency of 12 kHz, generating balanced training samples for each fault category. This data curation enables robust development and evaluation of algorithms for cross-condition bearing fault classification tasks.
\subsubsection{SEU}
The Southeast University (SEU) Gearbox Dataset (~\cite{shao2018highly}) contains mechanical vibration signals collected from three core industrial components: induction motors, bearings, and gearboxes. Developed by the School of Instrument Science and Engineering at SEU, the dataset includes 6,000 induction motor time series samples under six operating conditions (e.g., healthy, rotor bar fracture); 5,000 bearing samples covering ten health states (e.g., ball bearing failure, inner/outer raceway defects); and 9,000 gearbox samples covering five fault types (e.g., gear collapse, bearing crack). Four representative bearing health states are used here, the same as in the CWRU dataset.

\subsection{Base Models}
For the purpose of a controlled comparison, we selected five foundational models: ResNet, MobileNetV3, ShuffleNetV2, SqueezeNet, and GhostNet. A key aspect of our methodology was to standardize their complexity, ensuring each had a nearly identical parameter count. Despite this uniformity in size, these architectures feature fundamentally different structural strategies. ResNet’s design directly tackles gradient degradation in deep networks through its signature shortcut connections~\cite{he2016deep}. The other models prioritize computational economy: MobileNetV3 and ShuffleNetV2 are frameworks optimized for efficiency on low-power devices~\cite{howard2019searching,ma2018shufflenet}, while SqueezeNet and GhostNet achieve extreme model compression and minimal computational load, making them ideal choices for edge computing and mobile applications~\cite{iandola2016squeezenet,han2020ghostnet}.

\subsection{Methods for Feature Extraction}
To effectively leverage the powerful feature extraction capabilities of two-dimensional Convolutional Neural Networks (2D-CNNs), the original one-dimensional (1D) time-series vibration signals were transformed into two-dimensional (2D) time-frequency representations. The Continuous Wavelet Transform (CWT) was selected for this task due to its proficiency in analyzing non-stationary signals, which are characteristic of machine fault data, and its ability to provide excellent resolution in both the time and frequency domains.

\subsection{Evaluation Metrics}
Our evaluation protocol relies on two complementary metrics. The integrity of the error control mechanism is verified using the Empirical Coverage Rate (ECR), which measures the alignment between the observed error rate and the predefined significance level. Concurrently, the Average Prediction Set Size serves a dual purpose: it acts as a direct proxy for the model’s confidence in its decisions while also gauging the efficiency of the resulting predictions.

\subsection{Hyper - parameters}
To facilitate a fair assessment of predictive uncertainty across different architectures, we first standardized the model complexity. Five distinct models—ResNet, MobileNetV3, ShuffleNetV2, SqueezeNet, and GhostNet—were specifically re-architected to achieve parameter parity, each containing approximately 1.17 million parameters (ranging from 1,167,971 to 1,177,659). For the experimental setup, both the CWRU and SEU datasets were partitioned into a 60\% training subset and a 40\% testing subset. All models were trained for 20 epochs using a learning rate of 0.001. Following the training phase, the testing subset was further divided equally, dedicating one half for calibration and the other for final evaluation. This 1:1 split between calibration and test data was consistently applied in all experiments. 
\section{Experiments}

\begin{table}[!t]
\centering
\small 
\setlength{\textfloatsep}{12pt}
\setlength{\abovecaptionskip}{8pt}
\setlength{\belowcaptionskip}{10pt}
\renewcommand{\arraystretch}{1.3}

\caption{Accuracy Comparison of Different Models}
\label{tab:cross_ser_results}

\begin{threeparttable}
\setlength{\tabcolsep}{12pt} 
\begin{tabular}{@{} l S[table-format=2.1] S[table-format=2.1] @{}}
\toprule[1.5pt]
\textbf{Model Architecture} 
& {C→S (\%)} & {S→C (\%)} \\ 
\midrule[1pt]
ResNet         & 31.25        & 30.67       \\
MobileNetV3    & 24.52        & 29.23       \\
ShuffleNetV2   & 24.52 & 31.16 \\
SqueezeNet     & 24.12        & 29.50       \\
GhostNet       & 32.29       & 36.30       \\
\midrule[1.5pt]
\multicolumn{3}{l}{\textit{\textbf{Note}}: C = CWRU; S = SEU (Cross-domain accuracy test)} \\
\multicolumn{3}{l}{\textit{Direction: Source → Target indicates training on source, testing on target.}}
\end{tabular}
\end{threeparttable}
\end{table}
\begin{figure}[!t]
    \centering
    \begin{subfigure}[b]{0.18\textwidth}
        \centering
        \includegraphics[width=\textwidth]{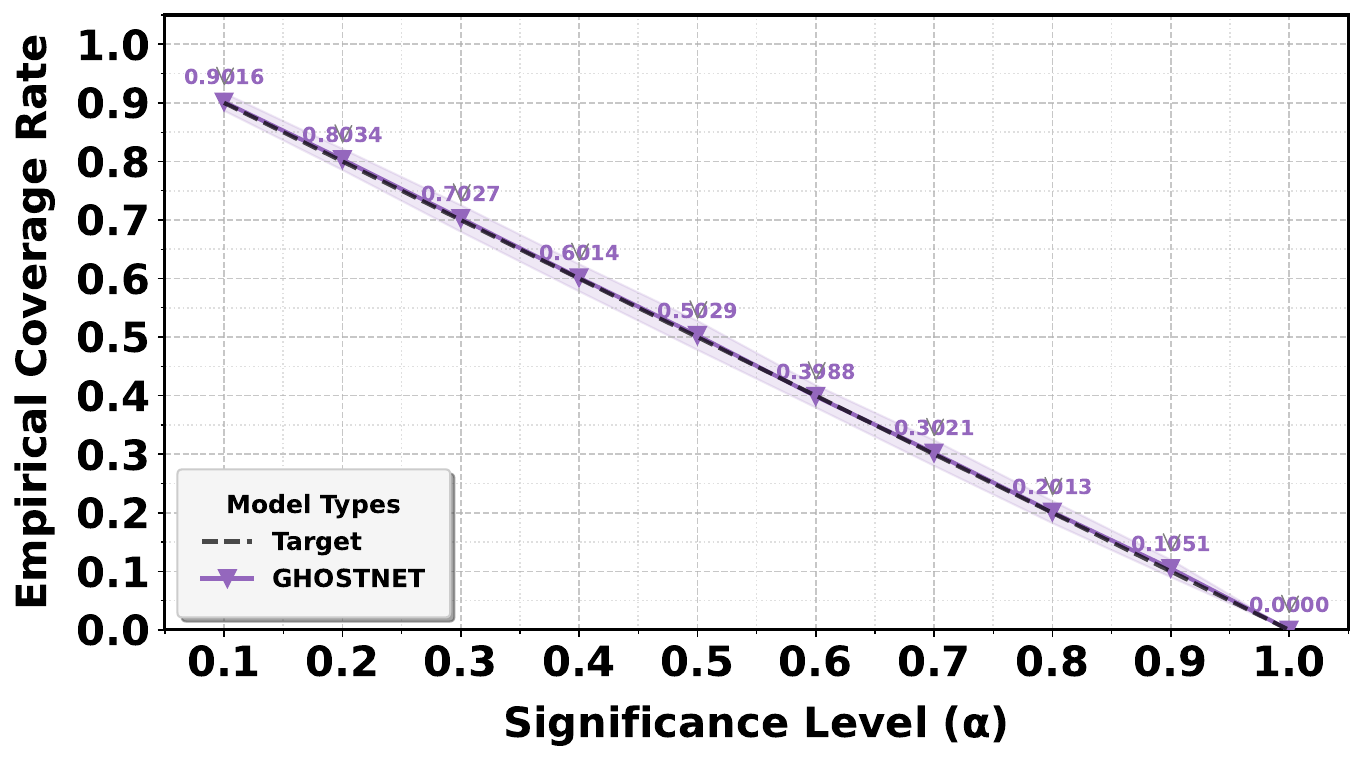}
        \caption{C→C(ghostnet)}
        \label{fig:sub1}
    \end{subfigure}
    \hfill
    \begin{subfigure}[b]{0.18\textwidth}
        \centering
        \includegraphics[width=\textwidth]{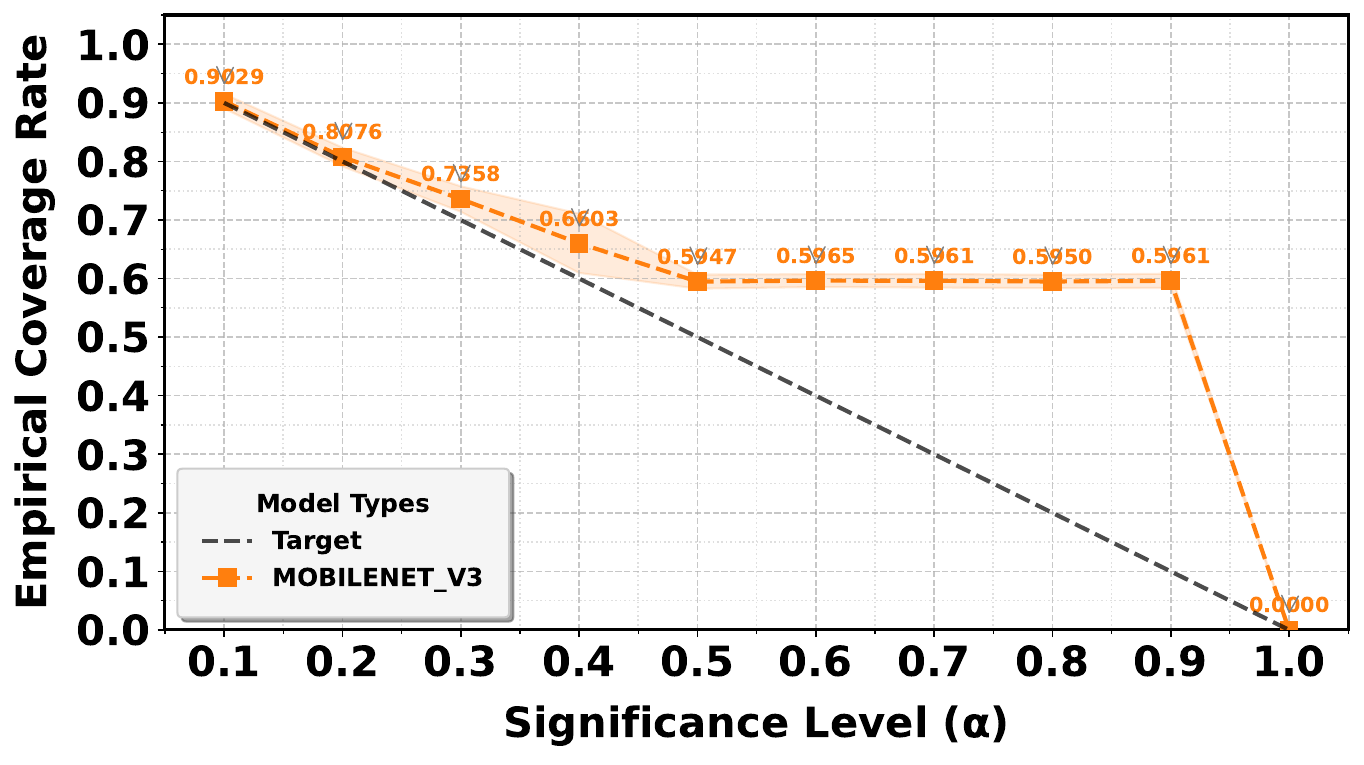}
        \caption{C→C(mobilenetv3)}
        \label{fig:sub2}
    \end{subfigure}
    \hfill
    \begin{subfigure}[b]{0.18\textwidth}
        \centering
        \includegraphics[width=\textwidth]{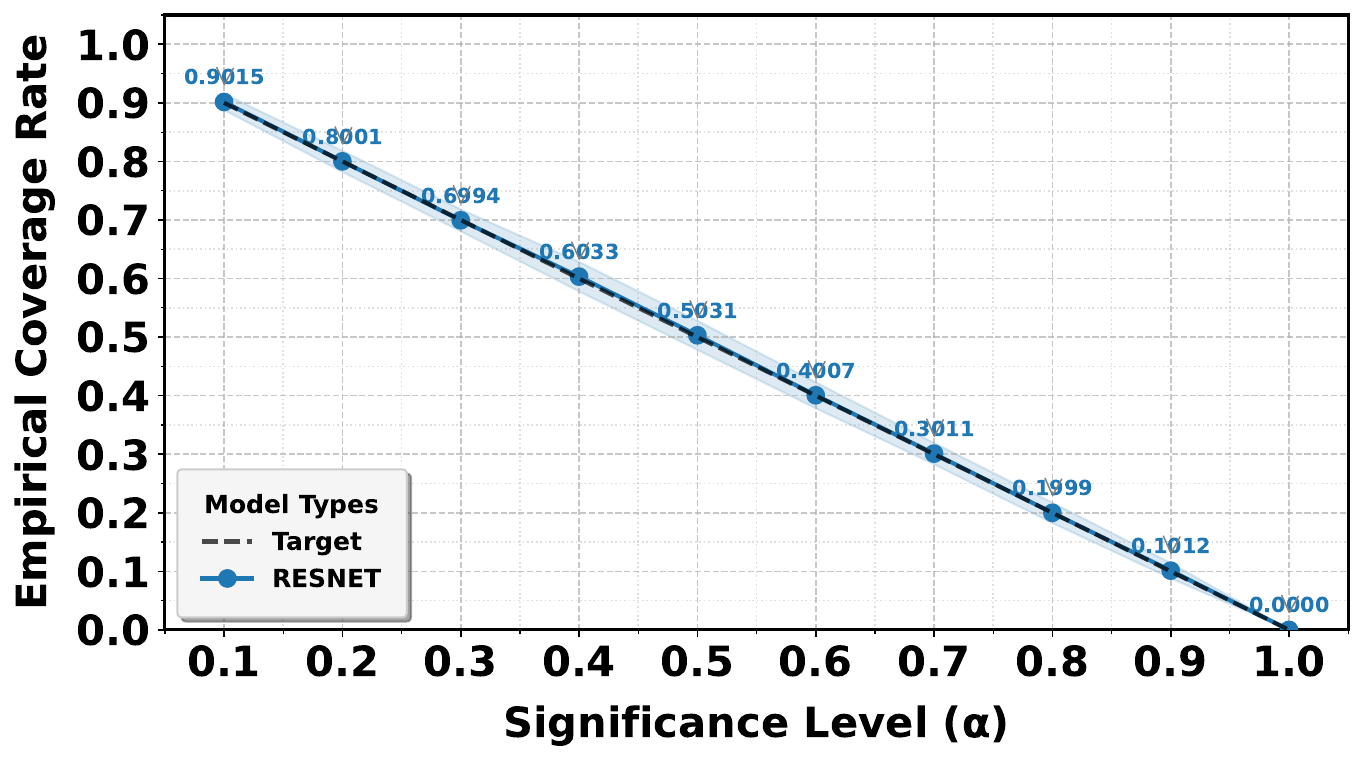}
        \caption{C→C(resnet)}
        \label{fig:sub3}
    \end{subfigure}
    \hfill
    \begin{subfigure}[b]{0.18\textwidth}
        \centering
        \includegraphics[width=\textwidth]{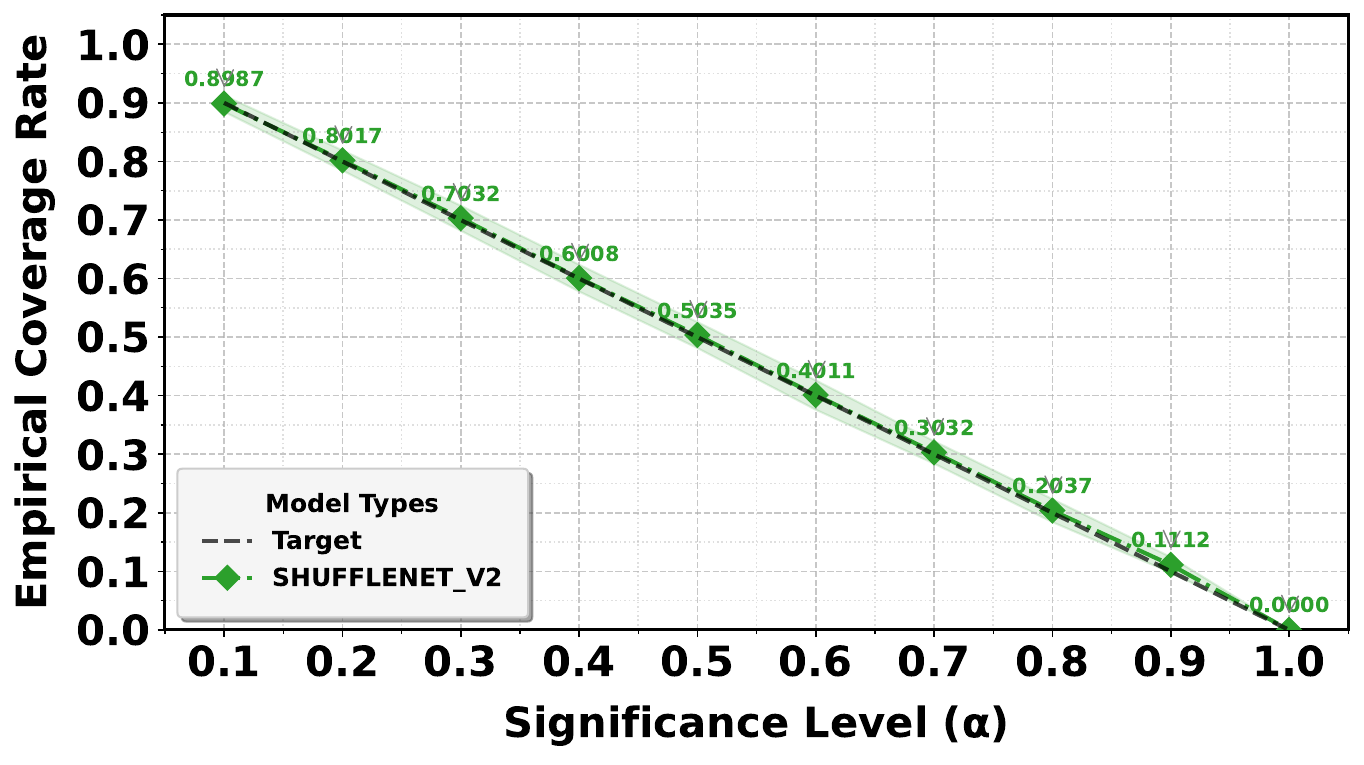}
        \caption{C→C(shufflenetv2)}
        \label{fig:sub4}
    \end{subfigure}
    \hfill
    \begin{subfigure}[b]{0.18\textwidth}
        \centering
        \includegraphics[width=\textwidth]{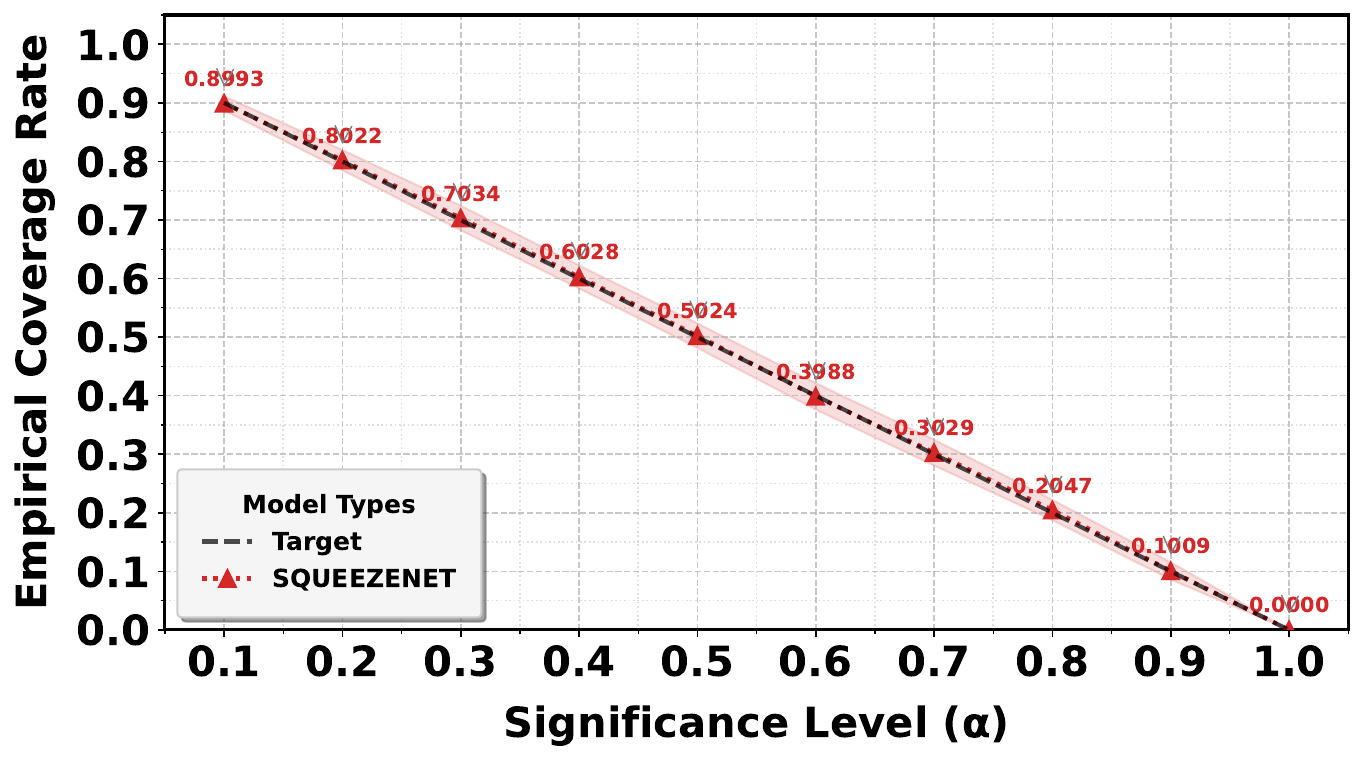}
        \caption{C→C(squeezenet)}
        \label{fig:sub5}
    \end{subfigure}
    
    \vspace{1cm} 
    
    \begin{subfigure}[b]{0.18\textwidth}
        \centering
        \includegraphics[width=\textwidth]{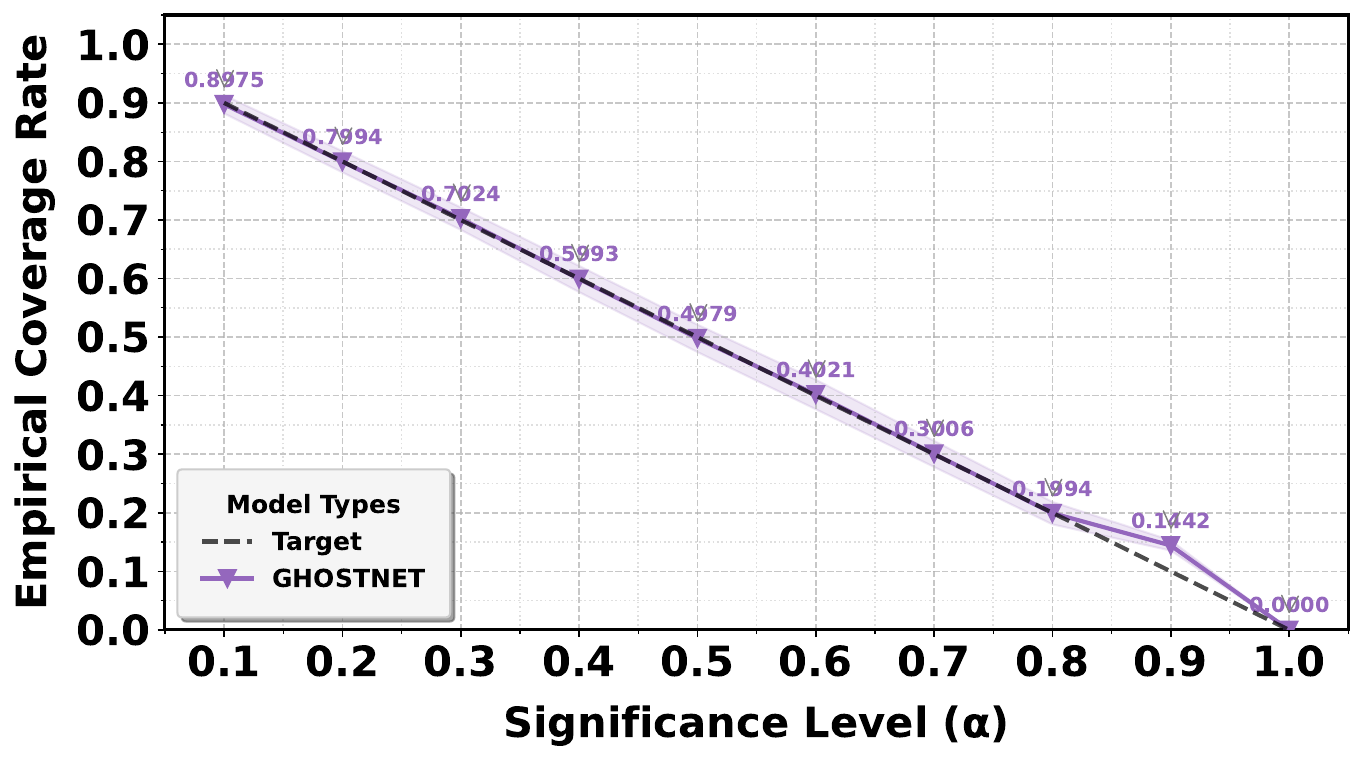}
        \caption{S→C(ghostnet)}
        \label{fig:sub6}
    \end{subfigure}
    \hfill
    \begin{subfigure}[b]{0.18\textwidth}
        \centering
        \includegraphics[width=\textwidth]{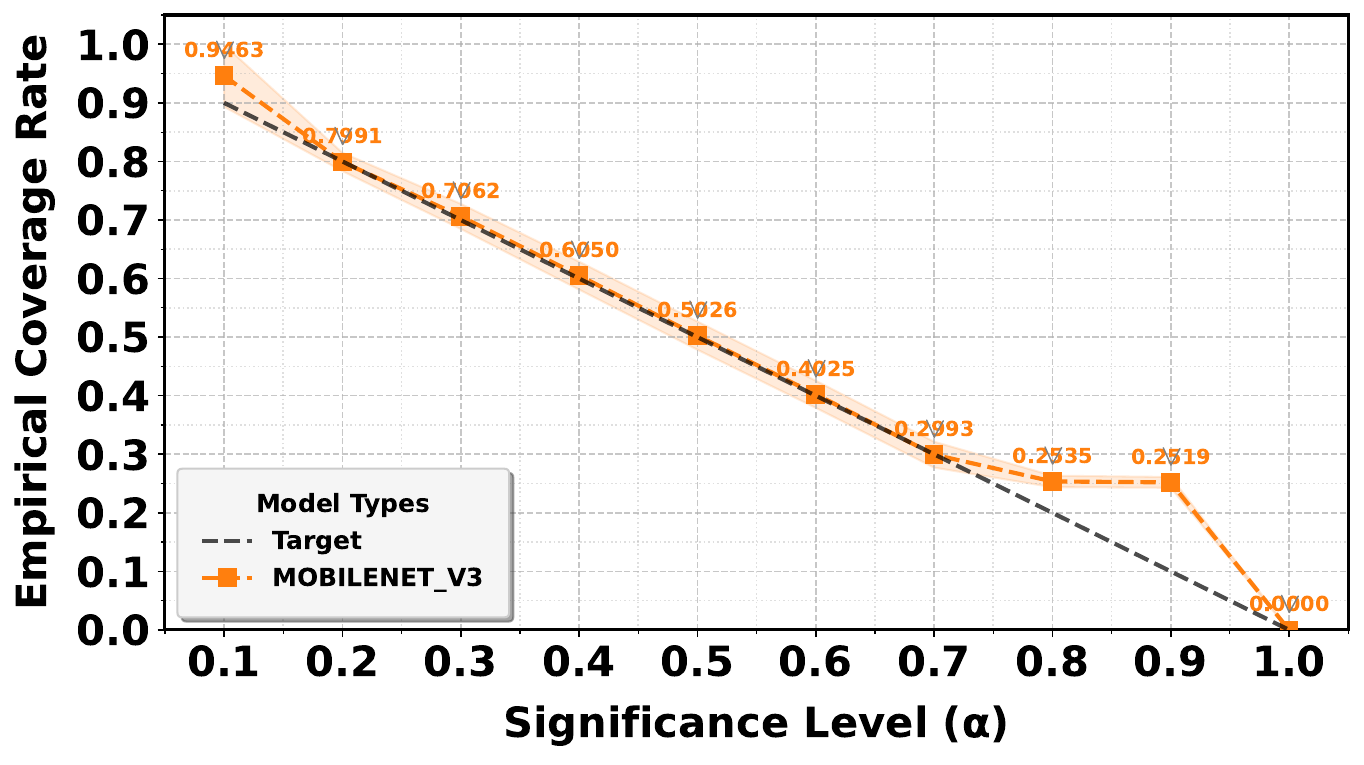}
        \caption{S→C(mobilenetv3)}
        \label{fig:sub7}
    \end{subfigure}
    \hfill
    \begin{subfigure}[b]{0.18\textwidth}
        \centering
        \includegraphics[width=\textwidth]{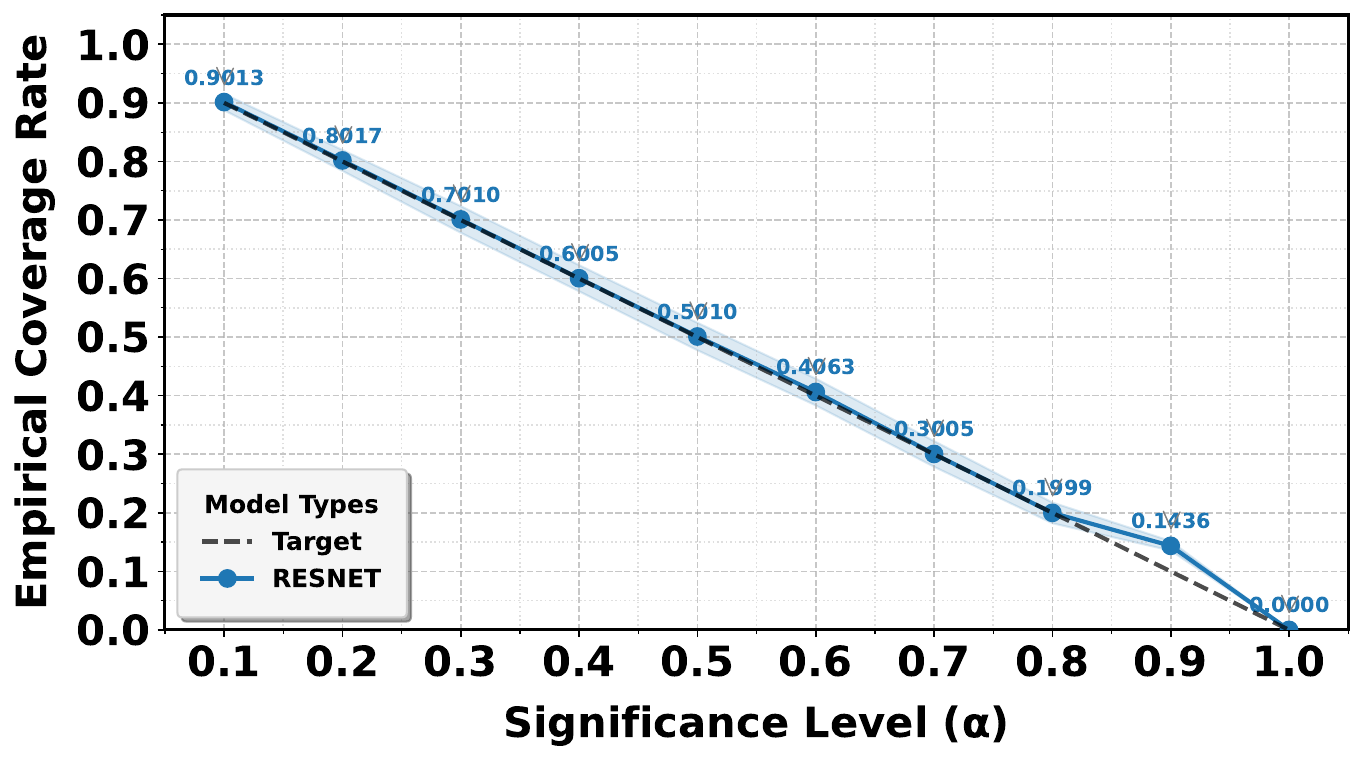}
        \caption{S→C(resnet)}
        \label{fig:sub8}
    \end{subfigure}
    \hfill
    \begin{subfigure}[b]{0.18\textwidth}
        \centering
        \includegraphics[width=\textwidth]{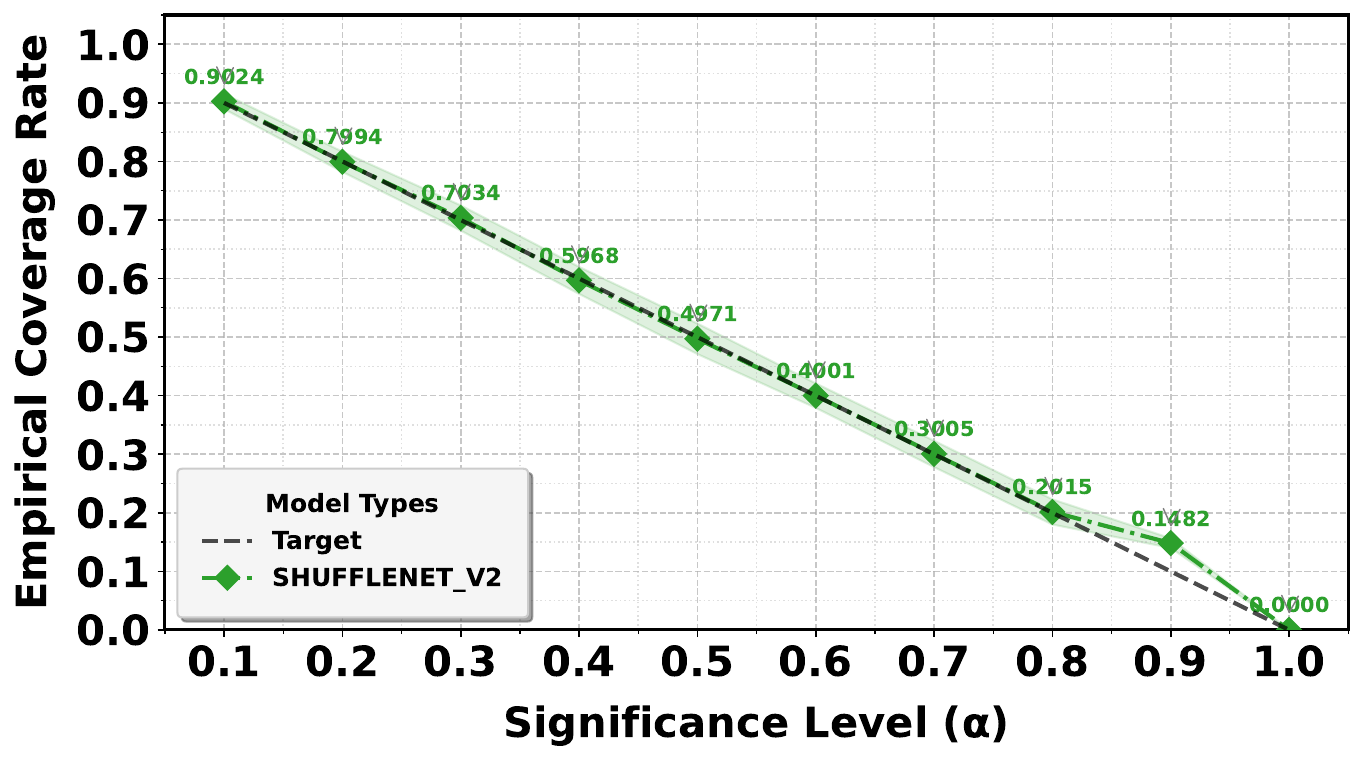}
        \caption{S→C(shufflenetv2)}
        \label{fig:sub9}
    \end{subfigure}
    \hfill
    \begin{subfigure}[b]{0.18\textwidth}
        \centering
        \includegraphics[width=\textwidth]{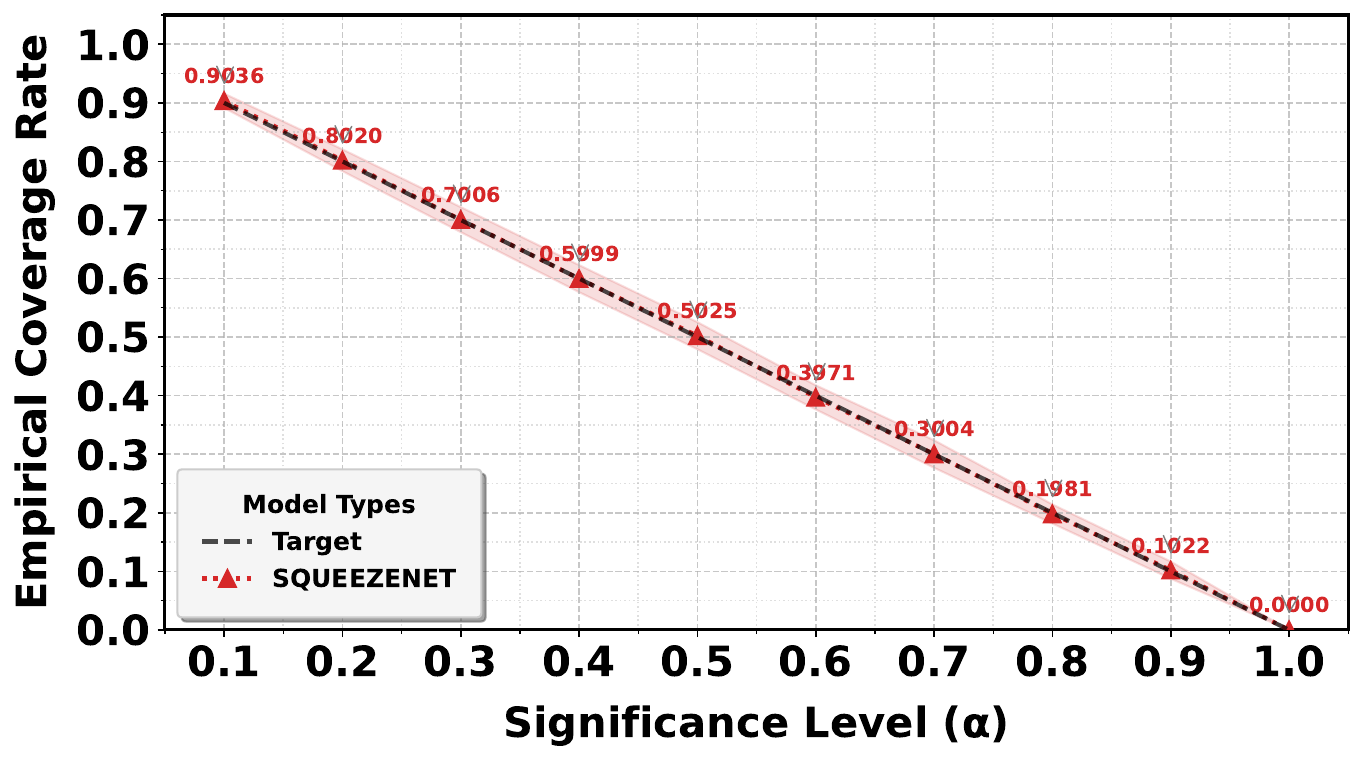}
        \caption{S→C(squeezenet)}
        \label{fig:sub10}
    \end{subfigure}
    \caption{Empirical coverage rate curves of different models in CWRU→CWRU (upper row) and SEU→CWRU (lower row) scenarios (the black dashed line is the target coverage rate, and the shadow indicates the standard deviation of the coverage rate of 100 tests)}
    \label{fig:full1}
\end{figure}

\begin{figure}[!t]
    \centering
    \begin{subfigure}[b]{0.18\textwidth}
        \centering
        \includegraphics[width=\textwidth]{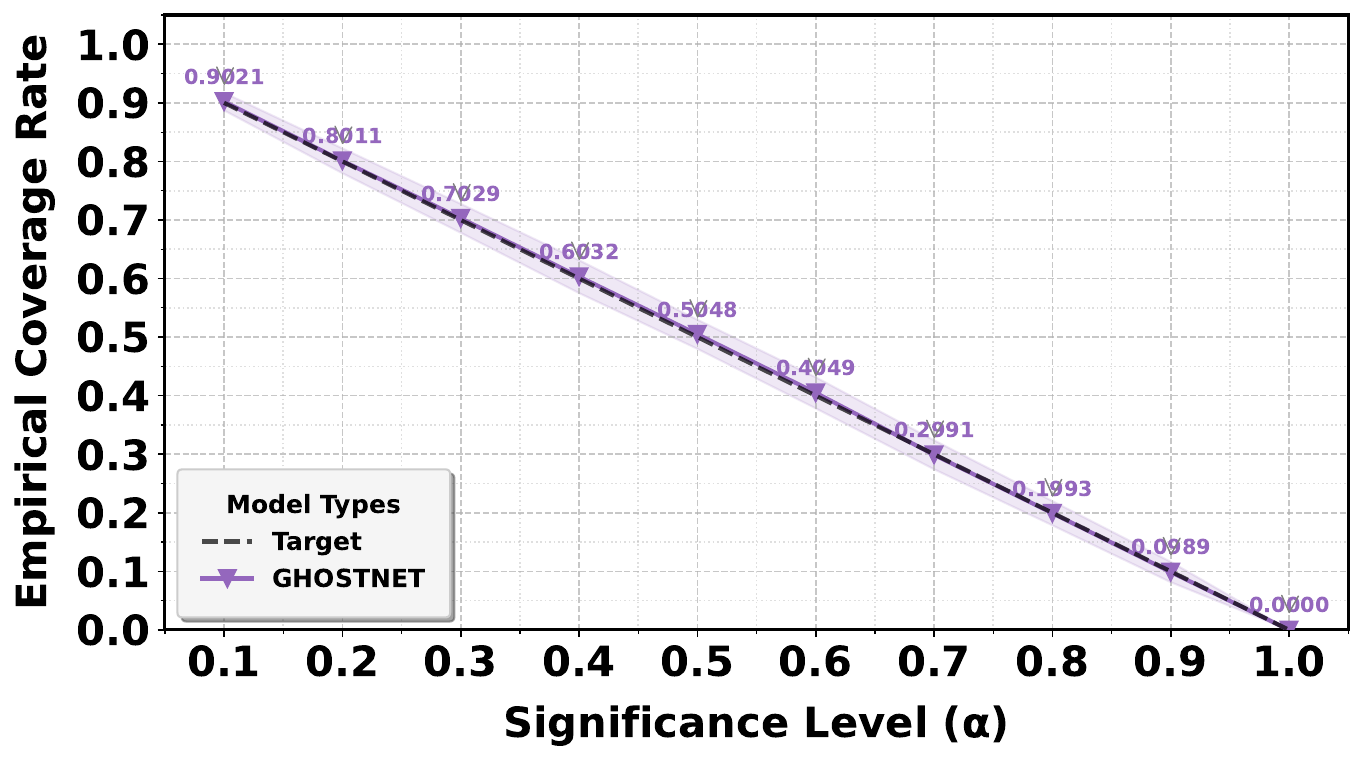}
        \caption{C→S(ghostnet)}
        \label{fig:sub1}
    \end{subfigure}
    \hfill
    \begin{subfigure}[b]{0.18\textwidth}
        \centering
        \includegraphics[width=\textwidth]{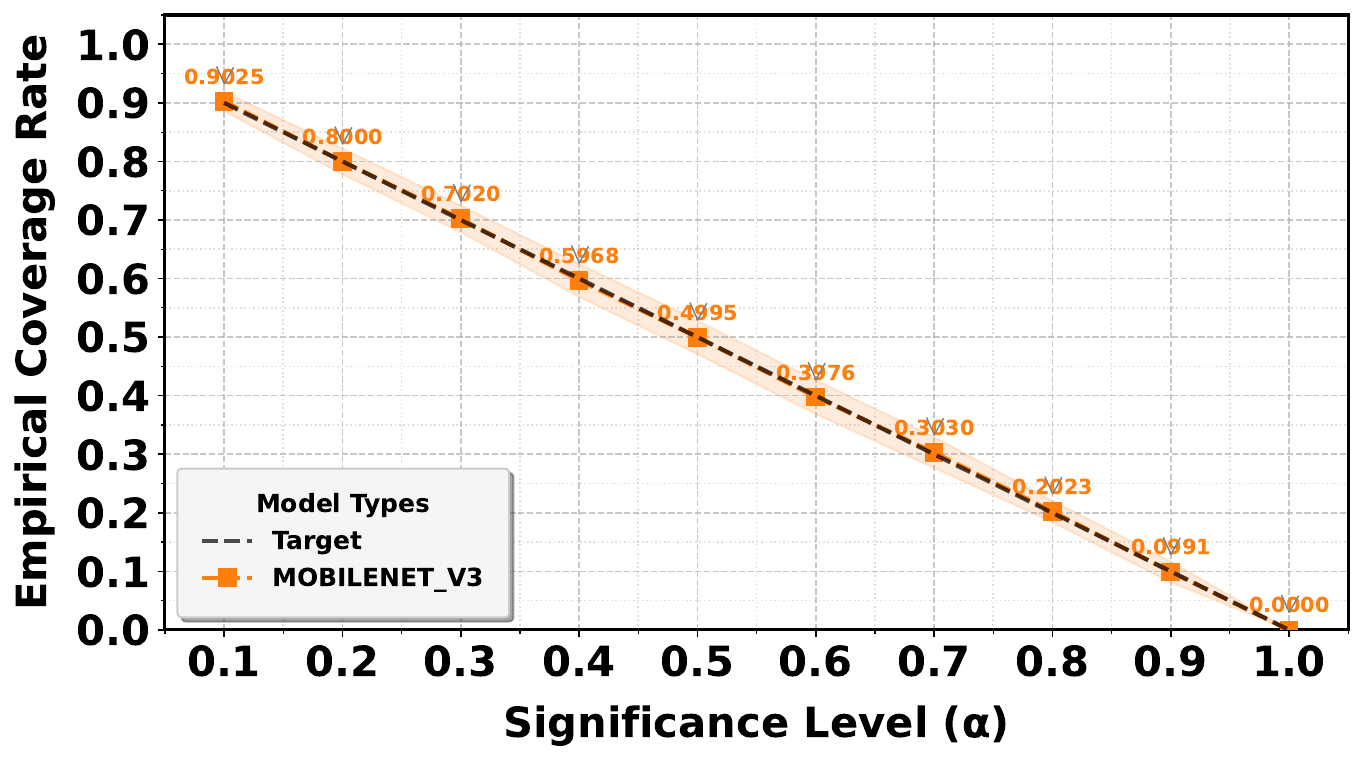}
        \caption{C→S(mobilenetv3)}
        \label{fig:sub2}
    \end{subfigure}
    \hfill
    \begin{subfigure}[b]{0.18\textwidth}
        \centering
        \includegraphics[width=\textwidth]{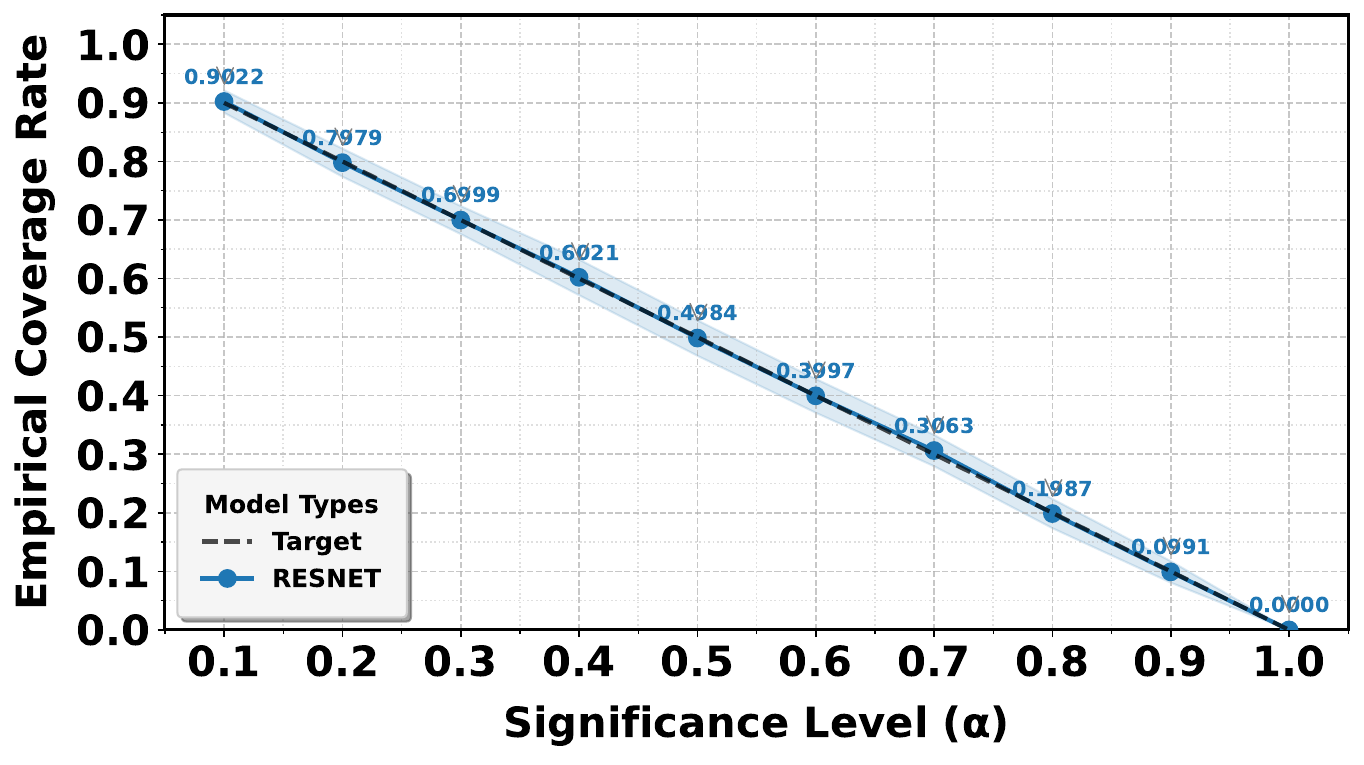}
        \caption{C→S(resnet)}
        \label{fig:sub3}
    \end{subfigure}
    \hfill
    \begin{subfigure}[b]{0.18\textwidth}
        \centering
        \includegraphics[width=\textwidth]{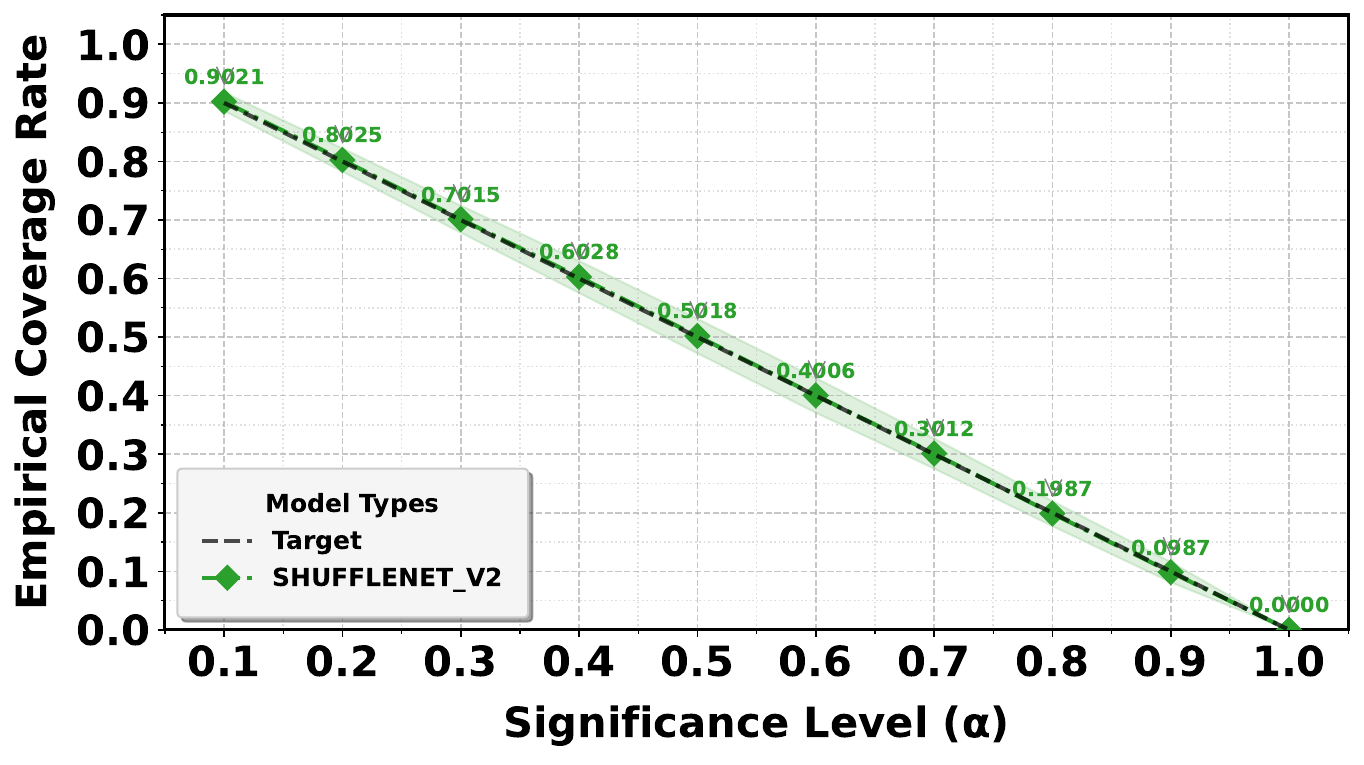}
        \caption{C→S(shufflenetv2)}
        \label{fig:sub4}
    \end{subfigure}
    \hfill
    \begin{subfigure}[b]{0.18\textwidth}
        \centering
        \includegraphics[width=\textwidth]{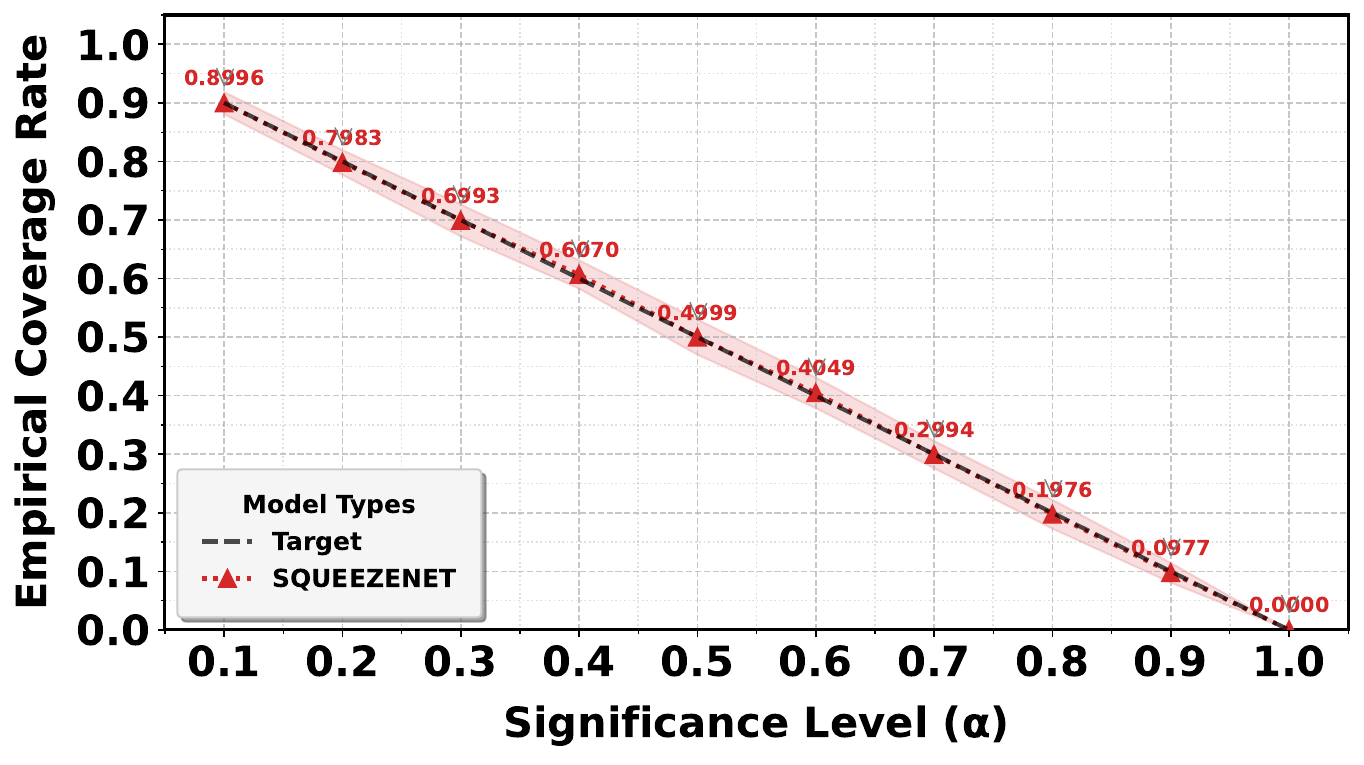}
        \caption{C→S(squeezenet)}
        \label{fig:sub5}
    \end{subfigure}
    
    \vspace{1cm} 
    
    \begin{subfigure}[b]{0.18\textwidth}
        \centering
        \includegraphics[width=\textwidth]{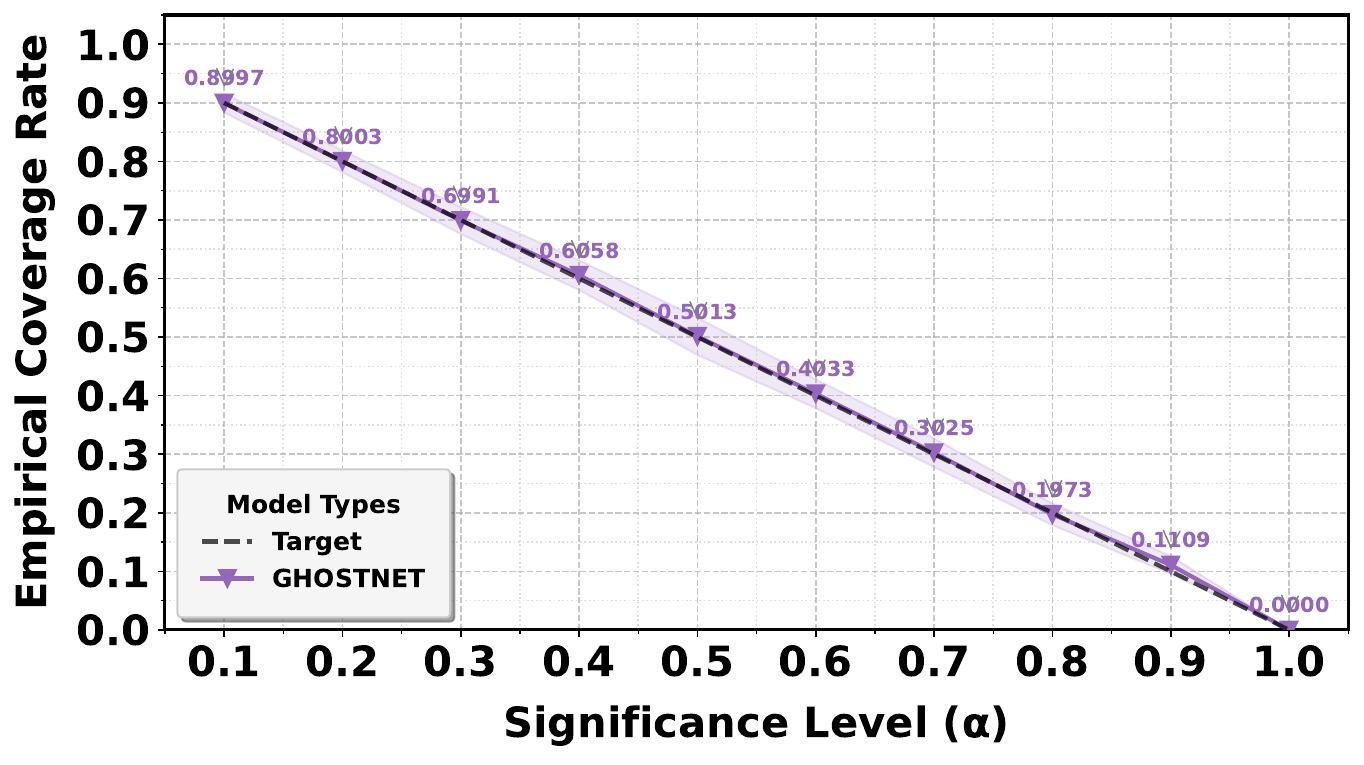}
        \caption{S→S(ghostnet)}
        \label{fig:sub6}
    \end{subfigure}
    \hfill
    \begin{subfigure}[b]{0.18\textwidth}
        \centering
        \includegraphics[width=\textwidth]{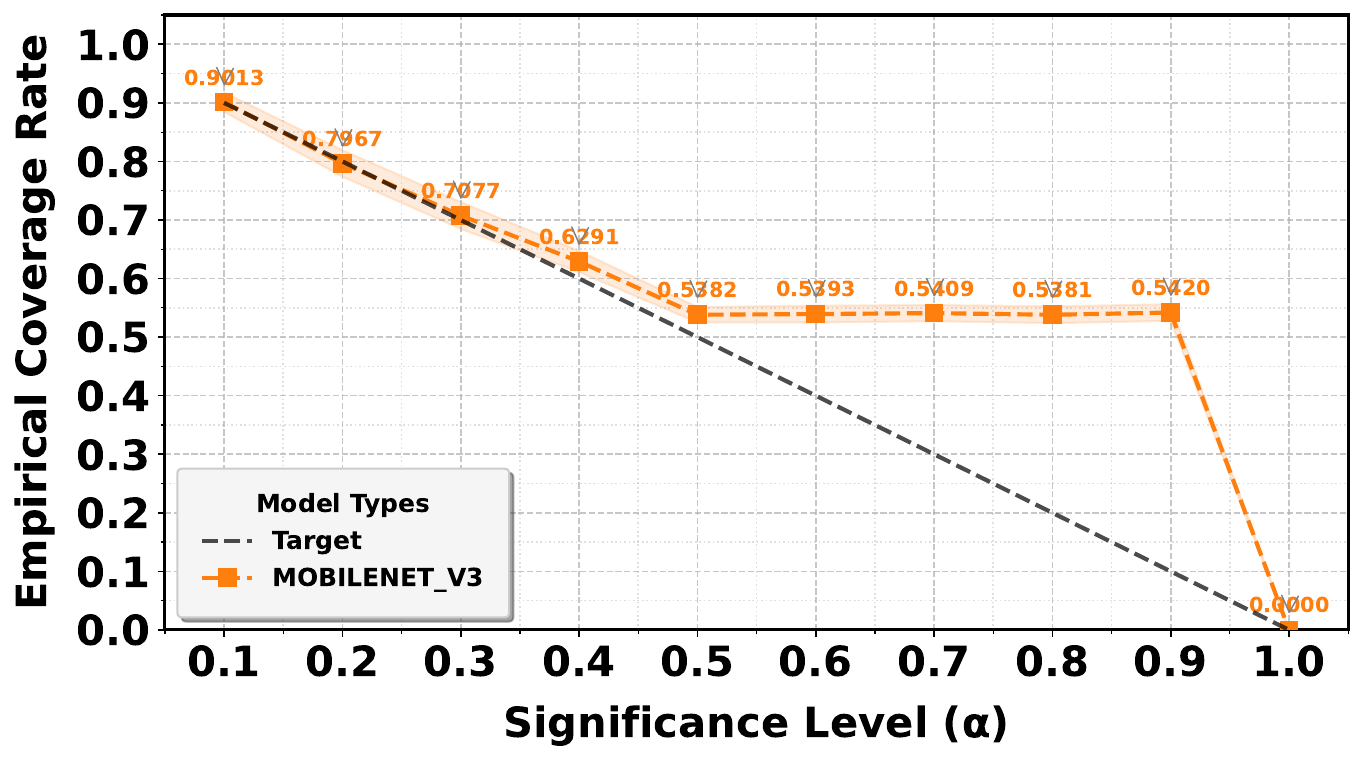}
        \caption{S→S(mobilenetv3)}
        \label{fig:sub7}
    \end{subfigure}
    \hfill
    \begin{subfigure}[b]{0.18\textwidth}
        \centering
        \includegraphics[width=\textwidth]{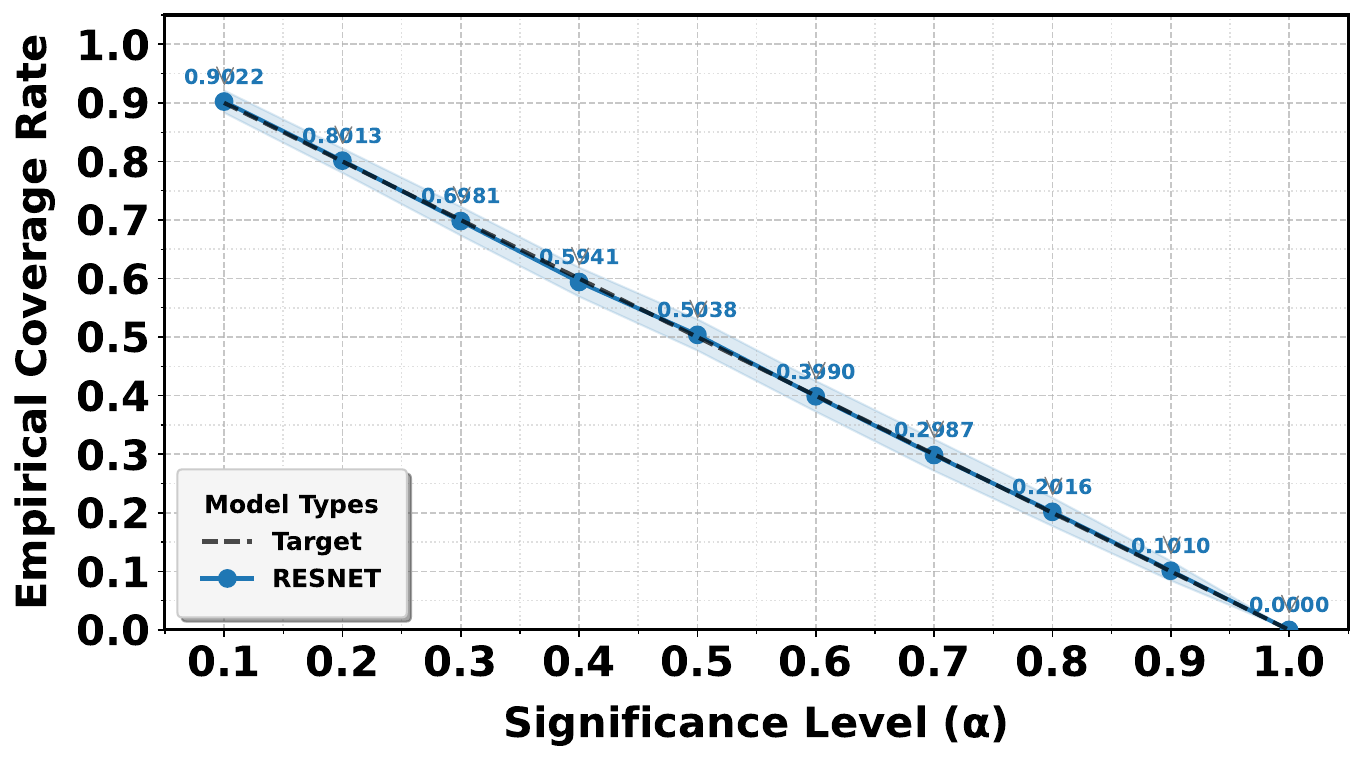}
        \caption{S→S(resnet)}
        \label{fig:sub8}
    \end{subfigure}
    \hfill
    \begin{subfigure}[b]{0.18\textwidth}
        \centering
        \includegraphics[width=\textwidth]{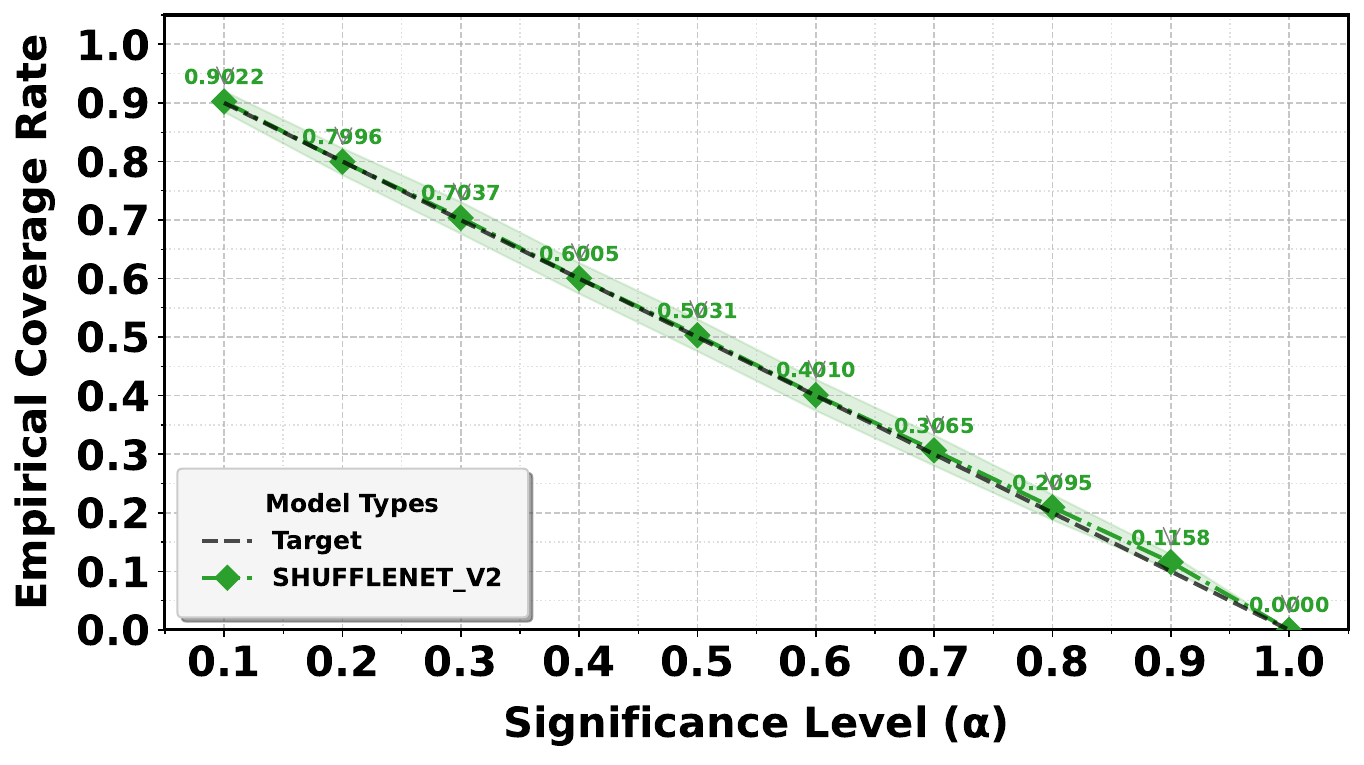}
        \caption{S→S(shufflenetv2)}
        \label{fig:sub9}
    \end{subfigure}
    \hfill
    \begin{subfigure}[b]{0.18\textwidth}
        \centering
        \includegraphics[width=\textwidth]{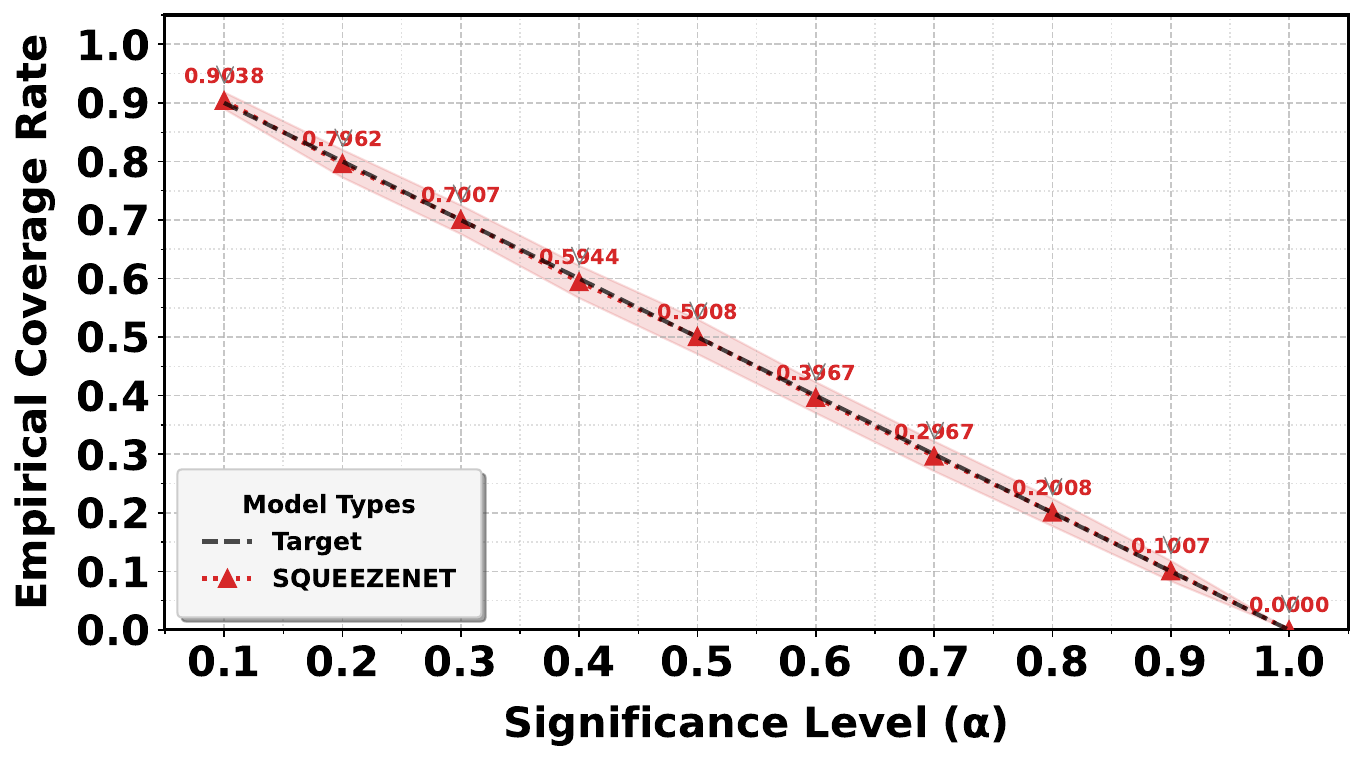}
        \caption{S→S(squeezenet)}
        \label{fig:sub10}
    \end{subfigure}
    \caption{Empirical coverage rate curves of different models in CWRU→SEU (upper row) and SEU→SEU (lower row) scenarios (the black dashed line is the target coverage rate, and the shadow indicates the standard deviation of the coverage rate of 100 tests)}
    \label{fig:full2}
\end{figure}

\begin{table}[!t]
\centering
\footnotesize 
\setlength{\tabcolsep}{3pt}
\caption{The prediction efficiency of models trained on SEU and tested on CWRU, as well as models trained on CWRU and tested on SEU, is quantified by p-value Conformal Prediction, measuring the mean prediction set size (±SD) for each test scenario (with label counts specified in parentheses for each dataset).}
\resizebox{\textwidth}{!}{ 
\begin{tabular}{@{}l l *{9}{c} @{}}
\toprule
\textbf{Dataset} & \textbf{Model} & \multicolumn{9}{c}{\textbf{Risk Level}} \\
\cmidrule(lr){1-2} \cmidrule(lr){3-11}
{} & {} & {0.1} & {0.2} & {0.3} & {0.4} & {0.5} & {0.6} & {0.7} & {0.8} & {0.9} \\
\midrule

\multicolumn{2}{@{}l@{} }{\textbf{Method: p-value Conformal Prediction}} & \multicolumn{9}{c}{} \\ 
\cmidrule{1-11}  

    \multirow{5}{*}{\centering CWRU(4)} 
        & Resnet   & $3.518_{\pm0.039}$ & $3.105_{\pm0.0628}$ & $2.574_{\pm0.091}$ & $2.069_{\pm0.071}$ & $1.728_{\pm0.053}$ & $1.374_{\pm0.049}$ & $0.884_{\pm0.138}$ & $0.467_{\pm0.040}$ & $0.234_{\pm0.010}$ 
        \\
        & Mobilenetv3   & $3.714_{\pm0.282}$ & $2.936_{\pm0.055}$ & $2.614_{\pm0.056}$ & $2.334_{\pm0.053}$ & $2.062_{\pm0.063}$ & $1.670_{\pm0.086}$ & $1.036_{\pm0.049}$ & $0.891_{\pm0.007}$ & $0.893_{\pm0.007}$  \\
        & Shufflenetv2   & $3.320_{\pm0.040}$ & $2.976_{\pm0.055}$ & $2.684_{\pm0.059}$ & $2.304_{\pm0.067}$ & $1.909_{\pm0.076}$ & $1.446_{\pm0.065}$ & $0.959_{\pm0.042}$ & $0.546_{\pm0.060}$ & $0.258_{\pm0.010}$  \\
        & Squeezenet   & $3.324_{\pm0.047}$ & $2.839_{\pm0.055}$ & $2.499_{\pm0.061}$ & $2.135_{\pm0.062}$ & $1.761_{\pm0.068}$ & $1.364_{\pm0.043}$ & $1.027_{\pm0.060}$ & $0.498_{\pm0.062}$ & $0.200_{\pm0.021}$  \\
        & Ghostnet   & $3.730_{\pm0.081}$ & $3.086_{\pm0.056}$ & $2.808_{\pm0.052}$ & $2.504_{\pm0.068}$ & $1.916_{\pm0.125}$ & $1.236_{\pm0.109}$ & $0.889_{\pm0.021}$ & $0.701_{\pm0.022}$ & $0.630_{\pm0.010}$  \\

    \cmidrule{2-11}  
    \multirow{5}{*}{\centering SEU(4)}  
     & Resnet   & $3.008_{\pm0.079}$ & $2.313_{\pm0.058}$ & $1.908_{\pm0.050}$ &
     $1.583_{\pm0.056}$ & $1.367_{\pm0.040}$ &
     $1.160_{\pm0.030}$ & $0.974_{\pm0.031}$ &  $0.740_{\pm0.044}$ & $0.145_{\pm0.044}$ \\
    & Mobilenetv3   & $3.550_{\pm0.035}$ & $3.185_{\pm0.049}$ & $2.141_{\pm0.066}$ & $1.751_{\pm0.043}$ & $1.485_{\pm0.046}$ & $1.198_{\pm0.037}$ & $1.059_{\pm0.019}$ & $0.556_{\pm0.055}$ & $0.215_{\pm0.030}$  \\
    & Shufflenetv2   & $3.373_{\pm0.041}$ & $3.039_{\pm0.035}$ & $2.480_{\pm0.065}$ & $2.103_{\pm0.060}$ & $1.743_{\pm0.078}$ & $1.314_{\pm0.071}$ & $1.081_{\pm0.025}$ & $0.793_{\pm0.039}$ & $0.514_{\pm0.040}$  \\
    & Squeezenet   & $3.843_{\pm0.035}$ & $3.483_{\pm0.074}$ & $2.986_{\pm0.091}$ & $2.537_{\pm0.081}$ & $1.886_{\pm0.111}$ & $1.402_{\pm0.040}$ & $1.165_{\pm0.038}$ & $0.929_{\pm0.022}$ & $0.761_{\pm0.025}$  \\
    & Ghostnet   & $3.228_{\pm0.053}$ & $2.726_{\pm0.075}$ & $2.326_{\pm0.068}$ & $1.954_{\pm0.062}$ & $1.518_{\pm0.070}$ & $1.232_{\pm0.032}$ & $0.909_{\pm0.065}$ & $0.439_{\pm0.031}$ & $0.241_{\pm0.026}$  \\
\bottomrule
\end{tabular}
}

\label{tab:framework_comparison}
\end{table}
\subsection{Feature Extraction and Time-Frequency Imaging}
For each raw vibration signal sample from the dataset, we applied the CWT using the Complex Morlet wavelet as the mother wavelet. The analysis was configured with a sampling frequency of 12~kHz, corresponding to the data acquisition rate, and a total of 128 scales to capture a wide range of frequency components. The resulting wavelet coefficients, which represent the signal's energy distribution across time and frequency, were converted to their absolute values. These magnitude matrices were then visualized as scalograms and saved as 224$\times$224 pixel color images in PNG format, a dimension compatible with standard pre-trained CNN architectures.  This procedure converted the entire 1D signal dataset into a corresponding 2D time-frequency image dataset, making it suitable for direct input into various CNN models for fault classification.

\subsection{Model Training}
We trained and evaluated five different Convolutional Neural Network (CNN) architectures: ResNet, MobileNetV3, ShuffleNetV2, SqueezeNet, and GhostNet. The final classification layer of each model is suitable for outputting predictions of four defined fault categories (normal, inner circle fault, ball fault, outer circle fault). All models have been trained to converge on the original dataset.

\subsection{Cross dataset evaluation}
A critical challenge for deep learning-based fault diagnosis models is their ability to generalize to data from different machines or operating conditions, a problem known as domain shift. To quantify this challenge, we conducted a cross-domain accuracy evaluation using the trained models and a second public dataset, the SEU dataset, denoted as `S`. The original CWRU dataset is denoted as `C`. The experiment involved two scenarios: training on `C` and testing on `S` (C$\to$S), and training on `S` and testing on `C` (S$\to$C).

The results, presented in Table~\ref{tab:cross_ser_results}, show a severe degradation in performance for all models when faced with data from an unseen domain. The accuracies plummeted to a range of 24\% to 36\%, which is only slightly better than random guessing for a four-class problem (25\%). This outcome starkly illustrates the poor generalization capability of the models, which tend to overfit to the specific statistical distribution of the source domain data. This limitation underscores the unreliability of deterministic point predictions in real-world scenarios where data distributions can vary. Consequently, this finding motivates our subsequent adoption of Calibrated Fault Detection with p-value Conformal Prediction, a method designed to produce statistically rigorous prediction sets with guaranteed coverage, thereby providing a more reliable measure of uncertainty.

\subsection{Calibrated Fault Detection with p-value Conformal Prediction}

To address the unreliability of point predictions in cross-domain scenarios, we employed calibrated fault detection with p-value conformal prediction to generate statistically rigorous prediction sets. This method provides a formal measure of confidence and guarantees a user-specified coverage rate. The procedure was implemented by first partitioning the target domain's test data into a proper calibration set and a validation set. For each sample in the calibration set, we computed conformity scores, which correspond to the softmax probability of the true class.

For a new sample from the validation set, a p-value is calculated for each possible fault class. This p-value quantifies how "unusual" the sample appears under a hypothetical label, relative to the samples in the calibration set. The final prediction set, $C(x)$, is then constructed by including all classes whose p-value exceeds a predefined significance level, $\alpha \in [0, 1]$. This framework mathematically guarantees that the long-run frequency of including the true label in the prediction set is at least $1-\alpha$, i.e., $P(\text{true label} \in C(x)) \geq 1-\alpha$. To evaluate this, we use two metrics:ECR to verify the coverage guarantee, and the APSS to assess the method's efficiency and uncertainty-quantification capability.

The experimental results validate the theoretical properties of our approach. Figure~\cref{fig:full1,fig:full2} demonstrates that for all five models and across both cross-domain scenarios (C$\to$S and S$\to$C), the empirical coverage rate consistently meets or exceeds the target coverage rate of $1-\alpha$. This holds true even for the models that exhibited poor point-prediction accuracy, confirming the robustness of the coverage guarantee.

Furthermore, Table~\ref{tab:framework_comparison} shows the relationship between the risk level $\alpha$ and the average prediction set size. As the risk level $\alpha$ increases (i.e., the desired coverage $1-\alpha$ decreases), the average size of the prediction sets becomes smaller. This illustrates that the prediction set size serves as an intuitive and direct indicator of model uncertainty: We set risk level small, in which case a small prediction set reflects high confidence in the model's predictions, while a large prediction set implies high uncertainty because the model cannot clearly distinguish between multiple potential fault types. This calibrated uncertainty measure is essential for making reliable decisions in safety-critical applications.

\section{Conclusion}
In this study, we introduced a novel fault detection framework by integrating conformal prediction with significance testing. This approach successfully addresses the critical challenge of unreliable performance and the lack of risk guarantees in traditional models, particularly under varying data distributions.Our key contribution is a method that provides a strict, theoretically-grounded guarantee: the false alarm rate is rigorously controlled under a user-specified significance level $\alpha$. Experiments on bearing fault diagnosis datasets confirm this guarantee, demonstrating the framework's reliability. Furthermore, the size of the generated prediction set serves as a direct measure of model uncertainty, enhancing the interpretability of the results.
\bibliographystyle{unsrt}  
\bibliography{references}

\end{document}